\documentclass{article} 
\usepackage{iclr2022_conference,times}
\iclrfinalcopy

\usepackage[utf8]{inputenc} 
\usepackage[T1]{fontenc}    
\usepackage{hyperref}       
\usepackage{url}            
\usepackage{booktabs}       
\usepackage{amsfonts}       
\usepackage{nicefrac}       
\usepackage{microtype}      
\usepackage{xcolor}         

\usepackage{graphicx}
\usepackage{hyperref}
\usepackage{wrapfig}

\usepackage{amsmath,amssymb,amsthm, bbm} 
\newtheorem{definition}{Definition}[section]
\newtheorem{lemma}{Lemma}[section]
\newtheorem{theorem}{Theorem}[section]
\newtheorem{corollary}{Corollary}[section]
\numberwithin{equation}{section}
\usepackage{cleveref}

\usepackage{bm}
\usepackage{xparse} 
\NewDocumentCommand \T { O{} m } {\boldsymbol{#1\mathscr{\MakeUppercase{#2}}}}

\NewDocumentCommand \Mx { O{} m } {{\bm{#1\mathbf{\MakeUppercase{#2}}}}} 

\NewDocumentCommand \V { O{} m } {{\bm{#1\mathbf{\MakeLowercase{#2}}}}}




\DeclareMathOperator*{\Span}{span}
\DeclareMathOperator{\proj}{proj}

\DeclareMathOperator{\Tr}{Tr}

\newcommand{\E}{\mathbb{E}}

\newcommand{\bbE}{\mathbb{E}}

\newcommand{\bbP}{\mathbb{P}}

\newcommand{\bbR}{\mathbb{R}}

\newcommand{\cL}{\mathcal{L}}


\newcommand{\Ps}{P_s(d,\epsilon,t)}


\newcommand{\threshdim}{threshold training dimension}
\newcommand{\Threshdim}{Threshold training dimension}
\newcommand{\localdim}{local angular dimension}
\newcommand{\Localdim}{Local angular dimension}

\usepackage{fixme}
\fxsetup{
  status=final,
  nomargin,
  inline,
  theme=color,
}
\fxsetup{marginface=\linespread{1}\footnotesize}
\FXRegisterAuthor{bl}{ble}{BL}
\FXRegisterAuthor{sf}{sfe}{SF}
\FXRegisterAuthor{sg}{sge}{SG}
\FXRegisterAuthor{nb}{nbe}{NB}

\title{How many degrees of freedom do we need to train deep networks: a loss landscape \\ perspective}

\author{Brett W. Larsen$^1$,  \: Stanislav Fort$^1$,  \: Nic Becker$^1$,  \: Surya Ganguli$^{1,2}$ \\
$^1$Stanford University,  $^2$Meta AI \: |
\: Correspondence to: \texttt{bwlarsen@stanford.edu}
}

\begin{document}

\maketitle

\begin{abstract}
A variety of recent works, spanning pruning, lottery tickets, and training within random subspaces, have shown that deep neural networks can be trained using far fewer degrees of freedom than the total number of parameters. We analyze this phenomenon for random subspaces by first examining the success probability of hitting a training loss sublevel set when training within a random subspace of a given training dimensionality.  We find a sharp phase transition in the success probability from $0$ to $1$ as the training dimension surpasses a threshold. This \threshdim{} increases as the desired final loss decreases, but decreases as the initial loss decreases. We then theoretically explain the origin of this phase transition, and its dependence on initialization and final desired loss, in terms of properties of the high dimensional geometry of the loss landscape.  In particular, we show via Gordon's escape theorem, that the training dimension plus the {\it Gaussian width} of the desired loss sub-level set, projected onto a unit sphere surrounding the initialization, must exceed the total number of parameters for the success probability to be large.  In several architectures and datasets, we measure the \threshdim{} as a function of initialization and demonstrate that it is a small fraction of the total parameters, implying by our theory that successful training with so few dimensions is possible precisely because the Gaussian width of low loss sub-level sets is very large. 
Moreover, we compare this \threshdim{} to more sophisticated ways of reducing training degrees of freedom, including lottery tickets as well as a new, analogous method: lottery subspaces. 
\end{abstract}

\vspace{-1em}
\section{Introduction} \label{sec:intro}
\vspace{-1em}

How many parameters are needed to train a neural network to a specified accuracy? Recent work on two fronts indicates that the answer for a given architecture and dataset pair is often much smaller than the total number of parameters used in modern large-scale neural networks. The first is successfully identifying lottery tickets or sparse trainable subnetworks through iterative training and pruning cycles \citep{frankle2018Lottery}.  Such methods utilize information from training to identify lower-dimensional parameter spaces which can optimize to a similar accuracy as the full model.  The second is the observation  that constrained training within a random, low-dimension affine subspace, is often successful at reaching a high desired train and test accuracy on a variety of tasks, provided that the training dimension of the subspace is above an empirically-observed threshold training dimension \citep{li2018intrinsicdimension}. These results, however, leave open the question of why low-dimensional training is so successful and whether we can theoretically explain the existence of a threshold training dimension.

In this work, we provide such an explanation in terms of the high-dimensional geometry of the loss landscape, the initialization, and the desired loss. In particular, we leverage a powerful tool from high-dimensional probability theory, namely Gordon's escape theorem, to show that this \threshdim{} is equal to the dimension of the full parameter space minus the squared Gaussian width of the desired loss sublevel set projected onto the unit sphere around initialization.  This theory can then be applied in several ways to enhance our understanding of neural network loss landscapes. For a quadratic well or second-order approximation around a local minimum, we derive an analytic bound on this \threshdim{} in terms of the Hessian spectrum and the distance of the initialization from the minimum.  For general models, this relationship can be used in reverse to measure important high dimensional properties of loss landscape geometry.  For example, by performing a tomographic exploration of the loss landscape, i.e. training within random subspaces of varying training dimension, we uncover a phase transition in the success probability of hitting a given loss sub-level set.  The threshold-training dimension is then the phase boundary in this transition, and our theory explains the dependence of the phase boundary on the desired loss sub-level set and the initialization, in terms of the Gaussian width of the loss sub-level set projected onto a sphere surrounding the initialization.

\begin{wrapfigure}{r}{0.36\textwidth}
  \centering
  \includegraphics[width=0.36\textwidth]{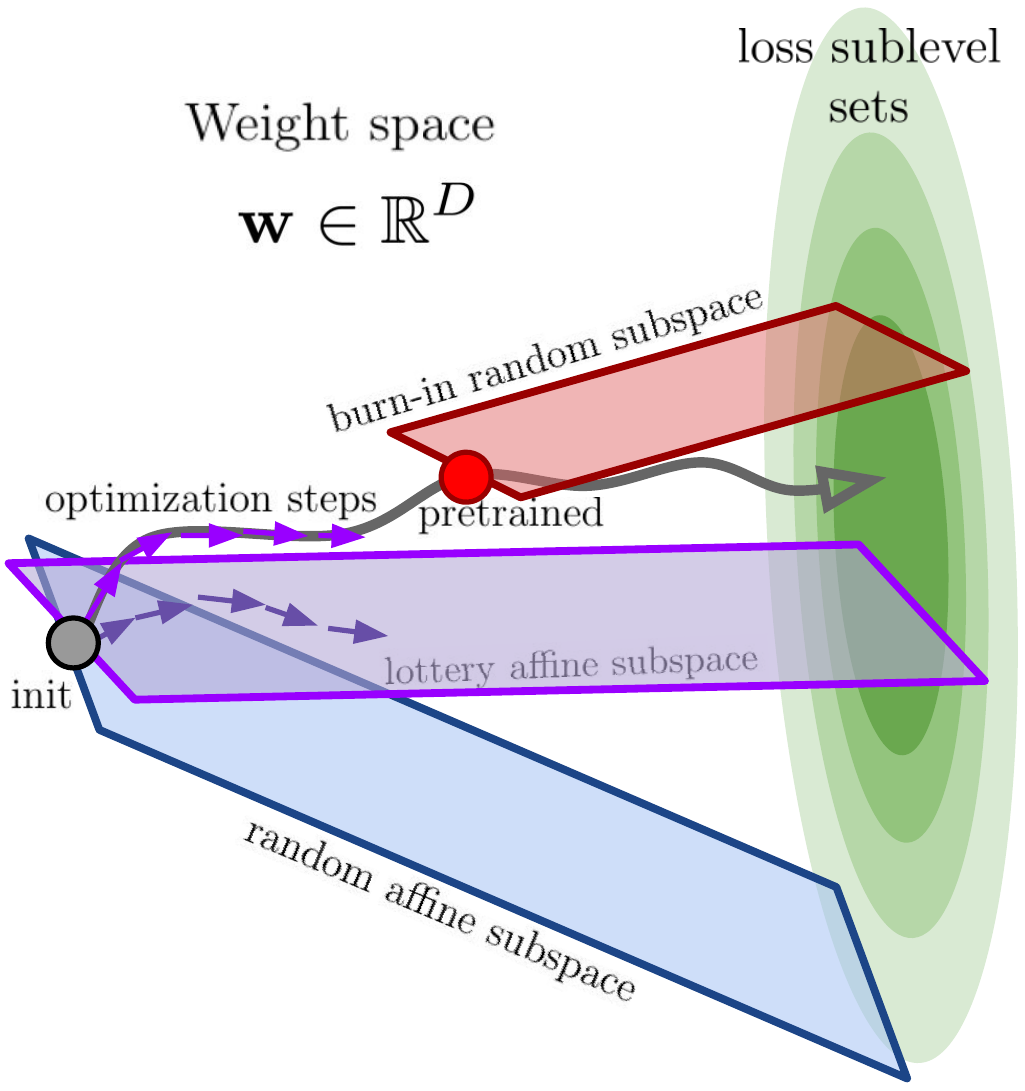}
  \caption{Illustration of finding a point in the intersection between affine subspaces and low-loss sublevel set. We use three methods: 1) \textit{random affine subspaces} (blue) containing the initialization, 2) \textit{burn-in affine subspaces} (red) containing a pre-trained point on the training trajectory, and 3) \textit{lottery subspaces} (purple) whose span is defined by the steps of a full training trajectory.}
  \vspace{-15pt}
  \label{fig:cartoon_cuts}
\end{wrapfigure}

Motivated by lottery tickets, we furthermore consider training not only within random dimensions, but also within optimized subspaces using information from training in the full space. Lottery tickets can be viewed as constructing an optimized, axis-aligned subspace, i.e. where each subspace dimension corresponds to a single parameter. What would constitute an optimized choice for general subspaces? We propose two new methods: burn-in subspaces which optimize the offset of the subspace by taking a few steps along a training trajectory and lottery subspaces determined by the span of gradients along a full training trajectory (Fig.~\ref{fig:cartoon_cuts}).  Burn-in subspaces in particular can be viewed as lowering the \threshdim{} by moving closer to the desired loss sublevel set. For all three methods, we empirically explore the \threshdim{} across a range of datasets and architectures. 

\textbf{Related Work:} An important motivation of our work is the observation that training within a random, low-dimensional affine subspace can suffice to reach high training and test accuracies on a variety of tasks, provided the training dimension exceeds a threshold that was called the \textit{intrinsic dimension} \citep{li2018intrinsicdimension} and which we call the \threshdim{}. However \citet{li2018intrinsicdimension} provided no theoretical explanation for this threshold and did not explore the dependence of this threshold on the quality of the initialization.  Our primary goal is to provide a theoretical explanation for the existence of this threshold in terms of the geometry of the loss landscape and the quality of initialization.  Indeed understanding the geometry of high dimensional error landscapes has been a subject of intense interest in deep learning, see e.g. \citet{Dauphin2014-lk, goodfellow2014qualitatively, fort2019largescale, ghorbani2019investigation, sagun2016eigenvalues, sagun2017empirical, yao2018hessianbased, fort2019goldilocks, papyan2020traces, gur2018gradient,fort2019emergent, papyan2019measurements,fort2020deep}, or \citet{Bahri2020-mi} for a review.  But to our knowledge, the Gaussian width of sub-level sets projected onto a sphere surrounding initialization, a key quantity that determines the threshold training dimension, has not been extensively explored in deep learning.   

Another motivation for our work is contextualizing the efficacy of diverse more sophisticated network pruning methods like lottery tickets \citep{frankle2018Lottery, frankle2019stabilizing}.  Further work in this area revealed the advantages obtained by pruning networks not at initialization \citep{frankle2018Lottery, Lee2018-vt, Wang2020-jt, Tanaka2020-no} but slightly later in training \citep{frankle2020pruning}, highlighting the importance of early stages of training \citep{jastrzebski2020breakeven,lewkowycz2020large}. 
We find empirically, as well as explain theoretically, that even when training within random subspaces, one can obtain higher accuracies for a given training dimension if one starts from a slightly pre-trained, or burned-in initialization as opposed to a random initialization.  

\section{An empirically observed phase transition in training success}
\label{sec:methods}

We begin with the empirical observation of a phase transition in the probability of hitting a loss sub-level set when training within a random subspace of a given training dimension, starting from some initialization.  Before presenting this phase transition, we first define loss sublevel sets and two different methods for training within a random subspace that differ only in the quality of the initialization. In the next section we develop theory for the nature of this phase transition.    

\paragraph{Loss sublevel sets.}  Let $\hat{\V{y}} = f_{\V{w}}(\V{x})$ be a neural network with weights $\V{w} \in \bbR^D$ and inputs $\V{x} \in \bbR^k$. For a given training set $\{\V{x}_n, \V{y}_n\}_{n=1}^N$ and loss function $\ell$, the empirical loss landscape is given by $\cL(\V{w}) = \frac{1}{N} \sum_{n = 1}^N \ell \Big (f_{\V{w}}(\V{x}_n), \V{y}_n \Big).$
Though our theory is general, we focus on classification for our experiments, where $\V{y} \in \{0,1\}^C$ is a one-hot encoding of $C$ class labels, $\hat{\V{y}}$ is a vector of class probabilities, and $\ell (\hat{\V{y}}, \V{y})$ is the cross-entropy loss.  In general, the loss sublevel set $S(\epsilon)$ at a desired value of loss $\epsilon$ is the set of all points for which the loss is less than or equal to $\epsilon$:  
\begin{equation}
\label{eq:sublevel}
    S(\epsilon) := \{\V{w} \in \bbR^D: \cL(\V{w}) \leq \epsilon \}.
\end{equation}

\paragraph{Random affine subspace.} 
Consider a $d$ dimensional random affine hyperplane contained in $D$ dimensional weight space, parameterized by $\V{\theta} \in \bbR^d$: $\V{w}(\V{\theta}) = \Mx{A} \V{\theta} + \V{w}_0.$
Here $\Mx{A} \in \bbR^{D \times d}$ is a random Gaussian matrix with columns normalized to $1$  and $\V{w}_0 \in \bbR^D$ a random weight initialization by standard methods. To train within this subspace, we initialize $\V{\theta} = \V{0}$, which corresponds to randomly initializing the network at $\V{w}_0$, and we minimize $\cL\big (\V{w}(\V{\theta}) \big)$ with respect to $\V{\theta}$.

\paragraph{Burn-in affine subspace.} Alternatively, we can initialize the network with parameters $\V{w}_0$ and train the network in the full space for some number of iterations $t$, arriving at the parameters $\V{w}_{\mathrm{t}}$.  We can then construct the random burn-in subspace 
\begin{equation}
    \label{eq:burnin}
    \V{w}(\V{\theta}) = \Mx{A} \V{\theta} + \V{w}_{\mathrm{t}},
\end{equation}
with $\Mx{A}$ chosen randomly as before, and then subsequently train within this subspace by minimizing $\cL \big (\V{w}(\V{\theta}) \big)$ with respect to $\V{\theta}$.  The random affine subspace is identical to the burn-in affine subspace but with $t=0$.  Exploring the properties of training within burn-in as opposed to random affine subspaces enables us to explore the impact of the quality of the initialization, after burning in some information from the training data,  on the success of subsequent restricted training. 

\paragraph{Success probability in hitting a sub-level set.} In either training method, achieving $\cL\left (\V{w}(\V{\theta}) \right) = \epsilon$ implies that the intersection between our random or burn-in affine subspace and the loss sub-level set $S(\epsilon')$ is non-empty for all $\epsilon' \geq \epsilon$.  
As both the subspace $\Mx{A}$ and the initialization $\V{w}_0$ leading to $\V{w}_{\mathrm{t}}$  are random, we are interested in the success probability $\Ps$ that a burn-in (or random when $t=0$) subspace of training dimension $d$ actually intersects a loss sub-level set $S(\epsilon)$:
\begin{equation}
   \Ps \equiv \bbP \Big [ S(\epsilon) \cap \big \{ \V{w}_t + \Span(\Mx{A}) \big \} \neq \emptyset  \Big ].
   \label{eq:ps}
\end{equation}
Here, $\Span(\Mx{A})$ denotes the column space of $\Mx{A}$. Note in practice we cannot guarantee that we obtain the minimal loss in the subspace, so we use the best value achieved by \texttt{Adam} \citep{kingma2014adam} as an approximation.  Thus the probability of achieving a given loss sublevel set via training constitutes an approximate lower bound on the probability in \eqref{eq:ps} that the subspace actually intersects the loss sublevel set. 

\paragraph{Threshold training dimension as a phase transition boundary.} We will find that for any fixed $t$, the success probability $\Ps$ in the $\epsilon$ by $d$ plane undergoes a sharp phase transition. In particular for a  desired (not too low) loss $\epsilon$ it transitions sharply from $0$ to $1$ as the training dimension $d$ increases. To capture this transition we define: 
\begin{definition}
\label{def:effDim}
[\Threshdim{}] The \threshdim{} $d^*(\epsilon, t, \delta)$ is the minimal value of $d$ such that $\Ps \geq 1-\delta$ for some small $\delta > 0$.  
\end{definition}
For any chosen criterion $\delta$ (and fixed $t$) we will see that the curve $d^*(\epsilon, t, \delta)$ forms a phase boundary in the $\epsilon$ by $d$ plane separating two phases of high and low success probability. 
\begin{figure}[!ht]
  \centering
  \includegraphics[width=\linewidth]{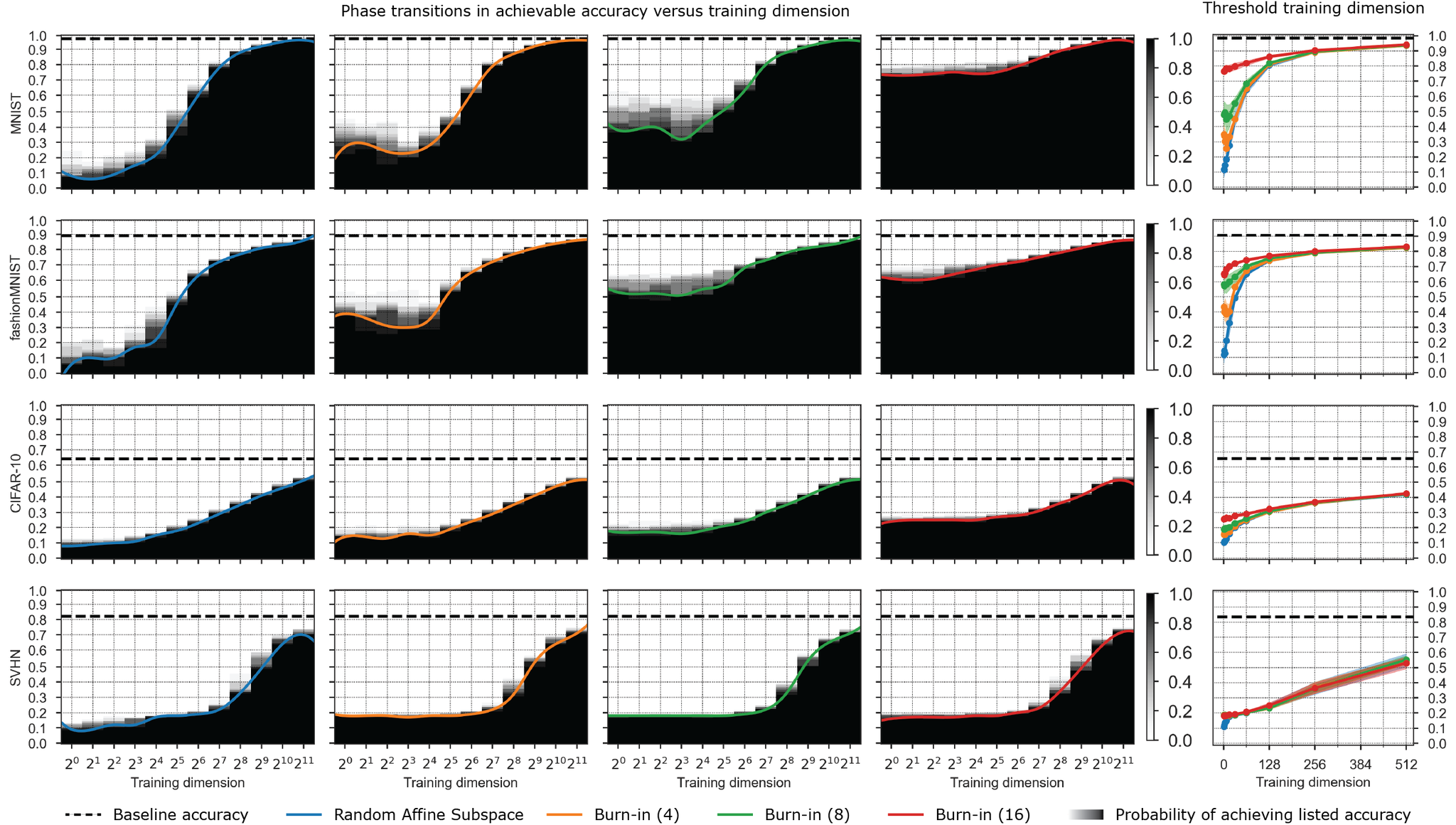}
  \caption{An empirical phase transition in training success on 4 datasets (4 rows) for a \texttt{Conv-2} comparing random affine subspaces (column 1) and burn-in affine subspaces with $t=4,8,16$ burn-in steps (columns 2,3,4). The black-white color maps indicate the empirically measured success probability $\Ps$ in \eqref{eq:ps} in hitting a training loss sub-level set (or more precisely a training accuracy super-level set). This success probability is estimated by training on $10$ runs at every training dimension $d$ and burn-in time $t$.  The horizontal dashed line represents the baseline accuracy obtained by training the full model for the same number of epochs. The colored curves indicate the \threshdim{} $d^*(\epsilon, t, \delta)$ in \cref{def:effDim} for $\delta=0.1$. The threshold training dimensions for the $4$ training methods are copied and superimposed in the final column.}
  \label{fig:empirical_probability}
\end{figure}
This definition also gives an operational procedure to approximately measure the \threshdim{}: run either the random or burn-in affine subspace method repeatedly over a range of training dimensions $d$ and record the lowest loss value $\epsilon$ found in the plane when optimizing via $\texttt{Adam}$.  We can then construct the empirical probability across runs of hitting a given sublevel set $S(\epsilon)$ and the \threshdim{} is lowest value of $d$ for which this probability crosses $1 - \delta$ (where we employ $\delta=0.1$).

\subsection{An empirical demonstration of a training phase transition}
\label{sec:exp}

 In this section, we carry out this operational procedure, comparing random and burn-in affine subspaces across a range of datasets and architectures. We examined $3$ architectures: 1) \texttt{Conv-2} which is a simple 2-layer CNN with 16 and 32 channels, ReLU activations and \texttt{maxpool} after each convolution followed by a fully connected layer; 2) \texttt{Conv-3} which is a 3-layer CNN with 32, 64, and 64 channels but otherwise identical setup to \texttt{Conv-2}; and 3) \texttt{ResNet20v1} as described in \cite{he2016deep} with on-the-fly batch normalization \citep{ioffe2015batch}.  We perform experiments on 5 datasets: MNIST \citep{lecun2010mnist}, Fashion MNIST \citep{xiao2017fashion}, CIFAR-10 and CIFAR-100 \citep{krizhevsky2014cifar}, and SVHN \citep{netzer2011reading}. Baselines and experiments were run for the same number of epochs for each model and dataset combination; further details on architectures, hyperparameters, and training procedures are provided in the appendix.
 The code for the experiments was implemented in JAX \citep{jax2018github}.

\Cref{fig:empirical_probability} shows results on the training loss for 4 datasets for both random and burn-in affine subspaces with a \texttt{Conv-2}.  We obtain similar results for the two other architectures (see Appendix).  \Cref{fig:empirical_probability} exhibits several broad and important trends.  First, for each training method within a random subspace, there is indeed a sharp phase transition in the success probability $\Ps$ in the $\epsilon$ (or equivalently accuracy) by $d$ plane from $0$ (white regions) to $1$ (black regions). Second, the threshold training dimension $d^*(\epsilon, t, \delta)$ (with $\delta=0.1$) does indeed track the tight phase boundary separating these two regimes.  Third, broadly for each method, to achieve a lower loss, or equivalently higher accuracy, the \threshdim{} is higher; thus one needs more training dimensions to achieve better performance. Fourth, when comparing the threshold training dimension across all $4$ methods on the same dataset (final column of \Cref{fig:empirical_probability}) we see that at high accuracy (low loss $\epsilon$), {\it increasing} the amount of burn in {\it lowers} the threshold training dimension.  To see this, pick a high accuracy for each dataset, and follow the horizontal line of constant accuracy from left to right to find the threshold training dimension for that accuracy.  The first method encountered with the lowest threshold training dimension is burn-in with $t=16$.  Then burn-in with $t=8$ has a higher \threshdim{} and so on, with random affine having the highest.  Thus the main trend is, for some range of desired accuracies, burning {\it more} information into the initialization by training on the training data {\it reduces} the number of subsequent training dimensions required to achieve the desired accuracy. 

\Cref{fig:MainExperiment} shows the threshold training dimension for each accuracy level for all three models on MNIST, Fashion MNIST and CIFAR-10, not only for training accuracy, but also for test accuracy.  The broad trends discussed above hold robustly for both train and test accuracy for all 3 models.    

\begin{figure*}[!ht]
  \centering
  \includegraphics[width=0.9\linewidth]{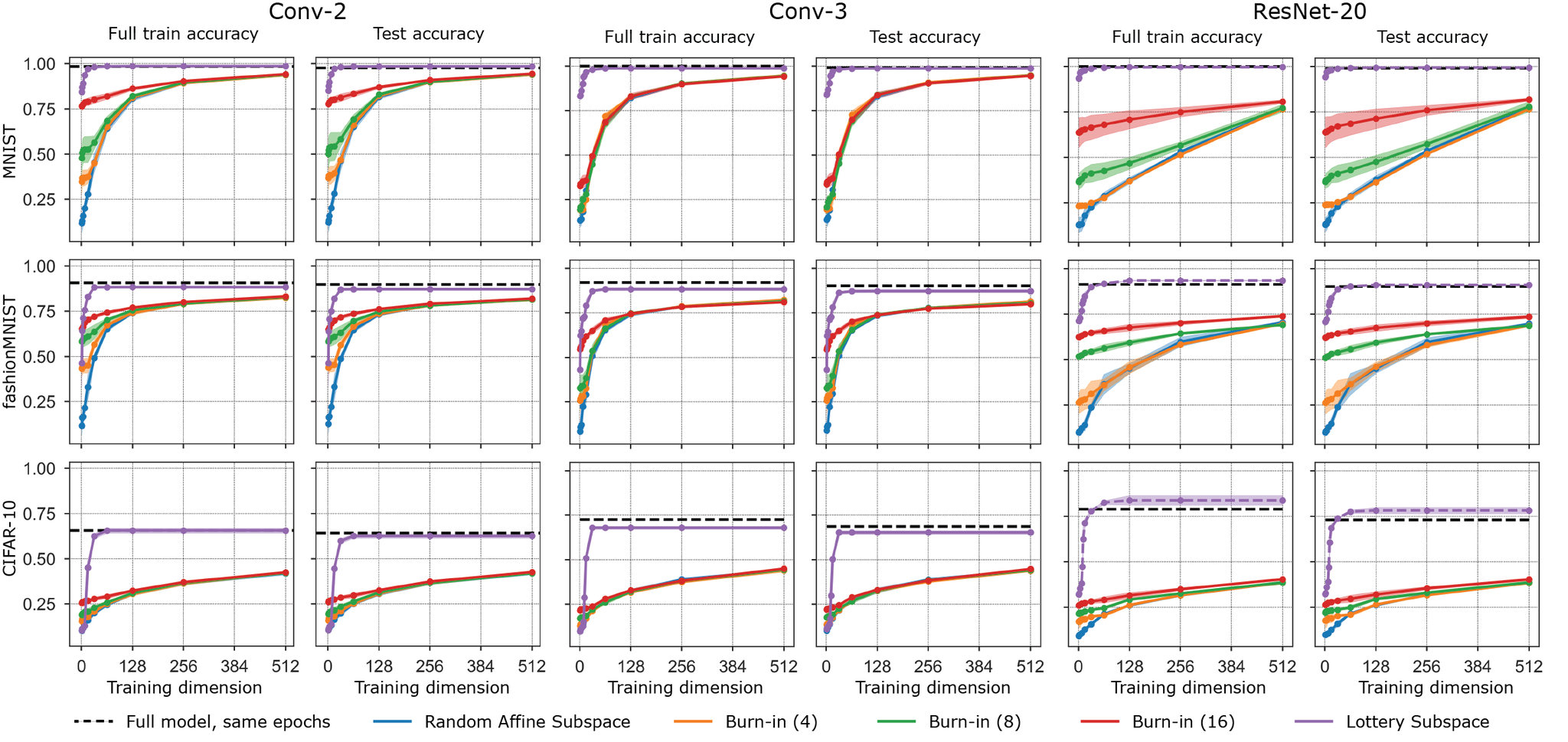}
  \caption{The \threshdim{} $d^*(\epsilon, t, \delta)$ in \cref{def:effDim}. Here we focus on small dimensions and lower desired accuracies to emphasize the differences in \threshdim{} across different training methods. The purple curves are generated via a novel lottery subspace training method which we introduce in \cref{sec:lottery}. The curves summarize data for 10 runs for \texttt{Conv-2}, 5 runs for \texttt{Conv-3}, and 3 runs for \texttt{ResNet20}; the choice of $\delta$ will determine how many runs must successfully hit the sublevel set when reading off $d^*$. The dimensions of the full parameter space for the experiments with CIFAR-10 are 25.6k for \texttt{Conv-2}, 66.5k for \texttt{Conv-3}, and 272.5k for \texttt{ResNet20}.  On the other two datasets, the full parameter space is 20.5k for \texttt{Conv-2}, 61.5k for \texttt{Conv-3}, and 272.2k for \texttt{ResNet20}. The black dotted line is the accuracy obtained by training the full model for the same number of epochs.}
  \label{fig:MainExperiment}
\end{figure*}

\section{A theory of the phase transition in training success}
\label{sec:theory}

Here we aim to give a theoretical explanation for the major trends observed empirically above, namely: (1) there exists a phase transition in the success probability $\Ps$ yielding a phase boundary given by a \threshdim{} $d^*(\epsilon, t, \delta)$; (2) at fixed $t$ and $\delta$ this threshold increases as the desired loss $\epsilon$ decreases (or desired accuracy increases), indicating more dimensions are required to perform better; (3) at fixed $\epsilon$ and $\delta$, this threshold decreases as the burn-in time $t$ increases, indicating {\it fewer} training dimensions are required to achieve a given performance starting from a {\it better} burned-in initialization.  Our theory will build upon several aspects of high dimensional geometry which we first review.  In particular we discuss, in turn, the notion of the Gaussian width of a set, then Gordon's escape theorem, and then introduce a notion of \localdim{} of a set about a point.  Our final result, stated informally, will be that the threshold training dimension plus the local angular dimension of a desired loss sub-level set about the initialization must equal the total number of parameters $D$. As we will see, this succinct statement will conceptually explain the major trends observed empirically.   First we start with the definition of Gaussian width:
\begin{definition}
[Gaussian Width] The Gaussian width of a  subset $S \subset \bbR^D$ is given by (see \Cref{fig:cartoon_gaussian_width}):
\begin{equation*}
    w(S) = \frac{1}{2} \bbE \sup_{\V{x}, \V{y} \in S} \langle \V{g}, \V{x} - \V{y} \rangle, \quad \V{g} \sim \mathcal{N}(\V{0}, \Mx{I}_{D \times D}).
\end{equation*}
\end{definition}
As a simple example, let $S$ be a solid $l_2$ ball of radius $r$ and dimension $d \ll D$ embedded in $\bbR^D$.  Then its Gaussian width for large $D$ is well approximated by $w(S) = r\sqrt{d}$.  
\begin{figure}[!ht]
  \centering
  \includegraphics[width=0.4\linewidth,trim=0 -1cm 0 -1cm]{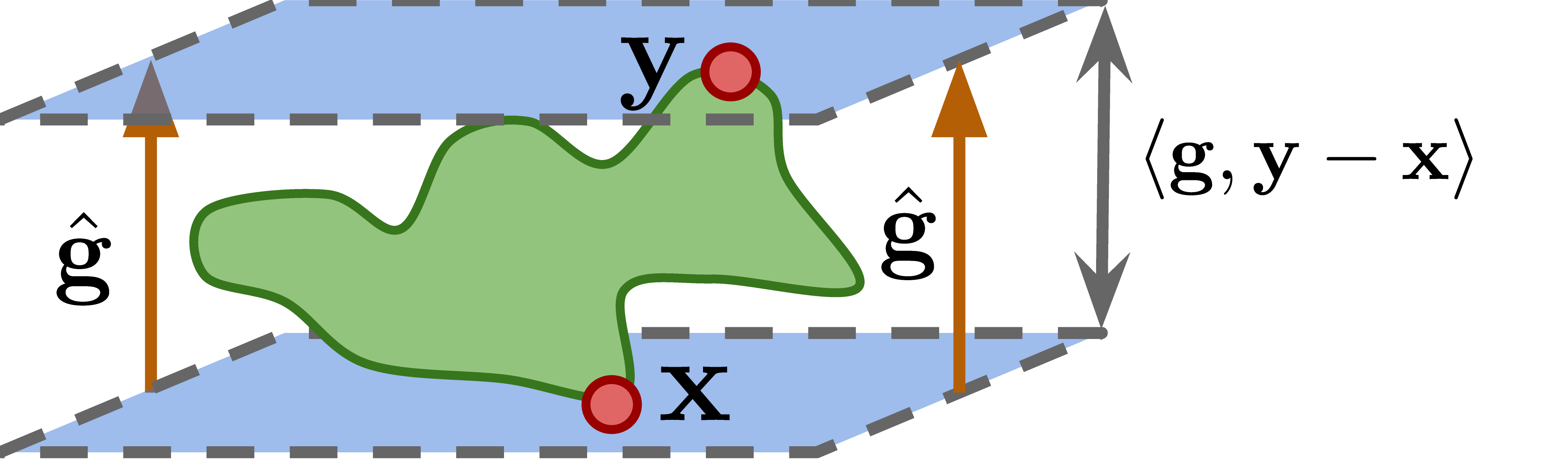}
  \includegraphics[width=0.4\linewidth]{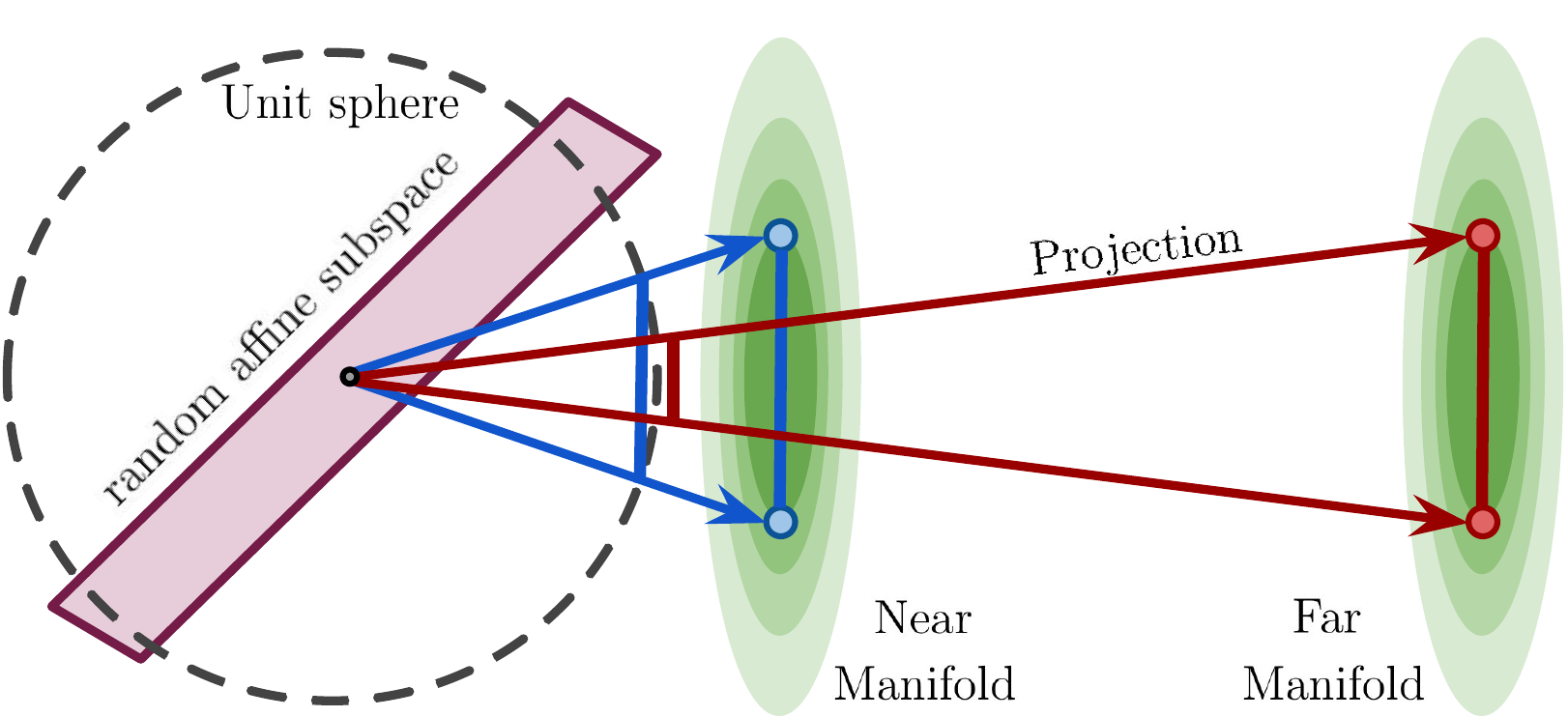}
  \caption{\textbf{Left panel:} An illustration of measuring the width of a set $S$ (in green) in a direction $\hat{\V{g}}$ by identifying $\V{x},\V{y} \in S$ in $\max_{\V{x},\V{y} \in S} \hat{\V{g}} \cdot (\V{y} - \V{x})$. The expectation of this width using random vectors $\V{g} \sim \mathcal{N}(\V{0},\Mx{I}_{D \times D})$ instead of $\hat{\V{g}}$ is twice the Gaussian width $w(S)$. Intuitively, it is the characteristic extent of the set $T$ over all directions rescaled by a factor between $D/\sqrt{D+1}$ and $\sqrt{D}$. \textbf{Right panel:} Illustration of projecting manifolds on the unit sphere and Gordon's escape theorem. The same manifold far from the sphere will have a smaller projection to it than the one that is close, and therefore it will be harder to intersect with an affine subspace.}
  \label{fig:cartoon_gaussian_width}
  \label{fig:cartoon_sphere_projection}
\end{figure}

\paragraph{Gordon's escape theorem.} The Gaussian width $w(S)$ of a set $S$, at least when that set is contained in a unit sphere around the origin, in turn characterizes the probability that a random subspace intersects that set, through Gordon's escape theorem \citep{gordon1988milman}:
\begin{theorem}
\label{thm:escape}
[Escape Theorem]  Let $S$ be a closed subset of the unit sphere in $\bbR^D$.  If $k > w(S)^2$, then a $d = D-k$ dimensional subspace $Y$ drawn uniformly from the Grassmannian satisfies \citep{gordon1988milman}:
\begin{equation*}
    \bbP \Big ( Y \cap S = \emptyset \Big ) \geq 1 - 3.5 \exp \left [ -  \big (k/\sqrt{k+1} - w(S) \big )^2/18 \right ].
\end{equation*}
\end{theorem}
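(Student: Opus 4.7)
The plan is to reduce the geometric event $Y \cap S = \emptyset$ to a quantitative Gaussian matrix estimate and then bound that estimate via Gordon's min-max comparison inequality combined with Gaussian concentration. By rotational invariance of the Haar measure on the Grassmannian, realize the uniformly random $(D-k)$-dimensional subspace $Y$ as the kernel of a $k \times D$ matrix $\Mx{G}$ with i.i.d.\ $\mathcal{N}(0,1)$ entries. Because $S$ lies on the unit sphere of $\bbR^D$, the event $\{Y \cap S = \emptyset\}$ is equivalent to the quantitative statement $\min_{\V{v}\in S}\|\Mx{G}\V{v}\| > 0$. Writing $\|\Mx{G}\V{v}\| = \max_{\V{u}\in S^{k-1}} \langle \V{u}, \Mx{G}\V{v}\rangle$, where $S^{k-1}$ denotes the unit sphere in $\bbR^k$, recasts the problem as a lower tail bound on the Gaussian min-max
\begin{equation*}
M \;:=\; \min_{\V{v}\in S}\,\max_{\V{u}\in S^{k-1}}\, \V{u}^{\top} \Mx{G} \V{v}.
\end{equation*}

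The heart of the proof is the lower bound $\bbE[M] \geq k/\sqrt{k+1} - w(S)$, obtained via Gordon's Gaussian min-max comparison inequality. On the index set $S^{k-1}\times S$, introduce the auxiliary centered Gaussian process $(\V{u},\V{v}) \mapsto \langle \V{g}, \V{u}\rangle + \langle \V{h}, \V{v}\rangle$ with $\V{g}\sim\mathcal{N}(\V{0},\Mx{I}_k)$ and $\V{h}\sim\mathcal{N}(\V{0},\Mx{I}_D)$ independent, alongside $X_{\V{u},\V{v}} = \V{u}^{\top}\Mx{G}\V{v}$. A short covariance computation (possibly after a variance-matching rescaling) verifies that the increments of $X$ are smaller than those of the auxiliary process within the outer variable $\V{v}$ and larger across different outer values $\V{v}\neq\V{v}'$, which is exactly the hypothesis of Gordon's theorem. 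The theorem yields $\bbE[M] \geq \bbE \min_{\V{v}\in S}\max_{\V{u}\in S^{k-1}}(\langle \V{g},\V{u}\rangle + \langle \V{h},\V{v}\rangle)$; this auxiliary quantity decouples because the inner max over $\V{u}$ equals $\|\V{g}\|$ and the outer min over $\V{v}$ equals $-\max_{\V{v}\in S}\langle -\V{h},\V{v}\rangle$. Using $-\V{h}\stackrel{d}{=}\V{h}$, the identity $\bbE\sup_{\V{v}\in S}\langle \V{h},\V{v}\rangle = w(S)$ (which follows from Gaussian symmetry applied to the symmetrized definition of $w(S)$), and the standard bound $\bbE\|\V{g}\|\geq k/\sqrt{k+1}$, one arrives at the claimed lower bound on $\bbE[M]$.

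Finally, Gaussian concentration converts this mean bound into the desired tail bound. The map $\Mx{G}\mapsto\min_{\V{v}\in S}\|\Mx{G}\V{v}\|$ is $1$-Lipschitz in the Frobenius norm, since $\Mx{G}\mapsto\|\Mx{G}\V{v}\|$ is $1$-Lipschitz for every unit $\V{v}$ and a pointwise infimum of $1$-Lipschitz functions is $1$-Lipschitz. Gaussian Lipschitz concentration on $\bbR^{k\times D}$ therefore gives $\bbP(M\leq \bbE[M]-t)\leq \exp(-t^2/2)$, and combining with the mean bound with $t=\bbE[M]$ yields an estimate of the form $\bbP(M>0)\geq 1 - \exp\!\bigl(-(k/\sqrt{k+1}-w(S))^2/2\bigr)$; the specific constants $3.5$ and $18$ in the stated theorem arise from Gordon's original, slightly looser bookkeeping and can be recovered by tracking his constants explicitly. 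The main obstacle in this plan is the covariance setup for Gordon's inequality itself: on the unit spheres one has $\mathrm{Var}(X_{\V{u},\V{v}})=1$ while $\mathrm{Var}(\langle\V{g},\V{u}\rangle + \langle \V{h},\V{v}\rangle)=2$, so a preliminary variance-matching rescaling (or use of the variance-free form of Gordon in Ledoux--Talagrand) is needed, and the direction of each covariance inequality must be checked with care; once Gordon's inequality is invoked correctly, the remaining steps---decoupling of the auxiliary process, the $\chi_k$ mean bound, and Lipschitz Gaussian concentration---are essentially mechanical.
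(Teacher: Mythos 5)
The paper does not actually prove Theorem~\ref{thm:escape}: it is quoted verbatim from \citet{gordon1988milman}, with the text deferring to \citet{mixon_2014} for an exposition, so there is no in-paper argument to compare against. Your outline is precisely the standard proof from those references and is essentially correct: realize $Y$ as $\ker \Mx{G}$ for a $k\times D$ Gaussian matrix, note that compactness of the closed set $S\subset S^{D-1}$ makes $\{Y\cap S=\emptyset\}$ equivalent to $M:=\min_{\V{v}\in S}\|\Mx{G}\V{v}\|_2>0$, lower-bound $\bbE[M]\ge \bbE\|\V{g}\|_2 - w(S)\ge k/\sqrt{k+1}-w(S)$ by Gordon's min--max comparison, and finish with Gaussian Lipschitz concentration. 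Two small points. First, the variance-matching issue you flag is a non-issue in the increment form of Gordon's comparison: with $X_{\V{u},\V{v}}=\V{u}^{\top}\Mx{G}\V{v}$ and $Y_{\V{u},\V{v}}=\langle\V{g},\V{u}\rangle+\langle\V{h},\V{v}\rangle$ one computes $\bbE(Y_{\V{u},\V{v}}-Y_{\V{u}',\V{v}'})^2-\bbE(X_{\V{u},\V{v}}-X_{\V{u}',\V{v}'})^2=2(1-\langle\V{u},\V{u}'\rangle)(1-\langle\V{v},\V{v}'\rangle)\ge 0$, with equality when $\V{v}=\V{v}'$, which is exactly the required hypothesis; no rescaling is needed. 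Second, your sharper conclusion $\bbP(M>0)\ge 1-\exp[-(k/\sqrt{k+1}-w(S))^2/2]$ does imply the stated bound whenever $k/\sqrt{k+1}\ge w(S)$ (and the stated bound is vacuous otherwise), so recovering Gordon's exact constants $3.5$ and $18$ is not even necessary. The only cosmetic gap is that the identity $w(S)=\bbE\sup_{\V{x}\in S}\langle\V{g},\V{x}\rangle$ for the paper's symmetrized definition deserves the one-line justification via $-\V{h}\stackrel{d}{=}\V{h}$ that the paper itself supplies in its appendix.
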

 A clear explanation of the proof can be found in \cite{mixon_2014}. 

Thus, the bound says when $k > w^2(S)$, the probability of no intersection quickly goes to $1- \epsilon$ for any $\epsilon > 0$.
Matching lower bounds which state that the intersection occurs with high probability when $k \leq w(S)^2$ have been proven for spherically convex sets \citep{amelunxen2014living}. Thus, this threshold is sharp except for the subtlety that you are only guaranteed to hit the spherical convex hull of the set (defined on the sphere) with high probability.

When expressed in terms of the subspace dimension $d=D-k$, rather than its co-dimension $k$, these results indicate that a $d$ dimensional subspace will intersect a closed subset $S$ of the unit sphere around the origin with high probability if and only if $d + w(S)^2 \geq D$, with a sharp transition at the threshold $d^* = D - w(S)^2$.  This is a generalization of the result that two random subspaces in $\bbR^D$  of dimension $d$ and $d_2$ intersect with high probability if and only if $d + d_2 > D$.  Thus we can think of $w(S)^2$ as playing a role analogous to dimension for sets on the centered unit sphere.

\subsection{Intersections of random subspaces with general subsets}

To explain the training phase transition, we must now adapt Gordon's escape theorem to a general loss sublevel set $S$ in $\bbR^D$, and we must take into account that the initialization $\V{w}_\mathrm{t}$ is not at the origin in weight space. To do so, we first define the projection of a set $S$ onto a unit sphere centered at $\V{w}_\mathrm{t}$:
\begin{equation} 
\mathrm{proj}_{\V{w}_t}(S) \equiv \{(\V{x} - \V{w}_\mathrm{t}) / ||\V{x} - \V{w}_\mathrm{t}||_2 : \V{x} \, \in \, S\}.
\end{equation}  
Then we note that any affine subspace $Y$ of the form in \cref{eq:burnin} centered at $\V{w}_\mathrm{t}$ intersects $S$ if and only if it intersects $\mathrm{proj}_{\V{w}_t}(S)$. Thus we can apply Gordon's escape theorem to $\mathrm{proj}_{\V{w}_t}(S)$ to compute the probability of the training subspace in \cref{eq:burnin} intersecting a sublevel set $S$.  Since the squared Gaussian width of a set in a unit sphere plays a role analogous to dimension, we define: 
\begin{definition} [\Localdim{}] The \localdim{} of a general set $S \subset \bbR^D$ about a point $\V{w}_t$ is defined as 
\begin{equation}
\begin{split}
    d_\mathrm{local}(S, \V{w}_t) \equiv  w^2(\mathrm{proj}_{\V{w}_t}(S)).
    \label{eq:thresh_dimension_for_real}
\end{split}
\end{equation}
\end{definition}
An escape theorem for general sets $S$ and affine subspaces now depends on the initialization $\V{w}_\mathrm{t}$ also, and follows from the above considerations and Gordon's original escape theorem: 
\begin{theorem}
\label{thm:mainTheorem}
[Main Theorem]  Let $S$ be a closed subset of $\bbR^D$.  If $k > w(\mathrm{proj}_{\V{w}_t}(S))^2$, then a $d = D-k$ dimensional affine subspace drawn uniformly from the Grassmannian and centered at $\V{w}_t$ satisfies:
\begin{equation*}
    \bbP \Big ( Y \cap S = \emptyset \Big ) \geq 1 - 3.5 \exp \left [ - \big (k/\sqrt{k + 1} - w(\mathrm{proj}_{\V{w}_t}(S)) \big )^2/18 \right ].
\end{equation*}
\end{theorem}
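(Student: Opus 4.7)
The plan is to reduce \Cref{thm:mainTheorem} to the original Gordon escape theorem (\Cref{thm:escape}) via two elementary geometric reductions: translation by $-\V{w}_t$ to convert the affine problem into a linear one, and radial projection onto the unit sphere centered at $\V{w}_t$. The crucial observation is that a \emph{linear} subspace, being closed under positive scaling, intersects a set if and only if it intersects the radial projection of that set onto the unit sphere. This puts the problem in exactly the form already handled by \Cref{thm:escape}.

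Concretely, I would write the affine subspace as $Y = \V{w}_t + V$, where $V$ is a $d$-dimensional linear subspace drawn uniformly from the Grassmannian, so that $Y \cap S = \emptyset$ if and only if $V \cap (S - \V{w}_t) = \emptyset$, where $S - \V{w}_t := \{\V{x} - \V{w}_t : \V{x} \in S\}$. Assuming $\V{w}_t \notin S$ so that $\V{0} \notin S - \V{w}_t$, I would then establish the equivalence $V \cap (S - \V{w}_t) = \emptyset \Longleftrightarrow V \cap \mathrm{proj}_{\V{w}_t}(S) = \emptyset$. One direction: any nonzero $\V{v} = \V{x} - \V{w}_t \in V$ yields $\V{v}/\|\V{v}\|_2 \in V \cap \mathrm{proj}_{\V{w}_t}(S)$ by closure of $V$ under scaling. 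The other direction: any unit vector $\V{u} \in V \cap \mathrm{proj}_{\V{w}_t}(S)$ can be rescaled by the positive factor $\|\V{x} - \V{w}_t\|_2$ to produce $\V{x} - \V{w}_t \in V \cap (S - \V{w}_t)$, again because $V$ is a linear subspace. Finally, since $\mathrm{proj}_{\V{w}_t}(S)$ lies in the unit sphere of $\bbR^D$, applying \Cref{thm:escape} to it with the same $k$, and substituting $w(\mathrm{proj}_{\V{w}_t}(S))$ for $w(S)$, delivers the claimed bound verbatim.

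The main obstacle is technical rather than conceptual. First, $\mathrm{proj}_{\V{w}_t}$ need not preserve closedness, whereas \Cref{thm:escape} as quoted is stated for closed subsets of the sphere; I would handle this by passing to the closure $\overline{\mathrm{proj}_{\V{w}_t}(S)}$, noting that the Gaussian width is unchanged under closure (the supremum defining $w$ is attained on $\bar{S}$ just as on $S$) and that for a uniformly random $V$ the event $V \cap \mathrm{proj}_{\V{w}_t}(S) = \emptyset$ differs from $V \cap \overline{\mathrm{proj}_{\V{w}_t}(S)} = \emptyset$ only on boundary points, which carry no probability mass under the continuous Grassmannian law. Second, the edge case $\V{w}_t \in S$ makes $\mathrm{proj}_{\V{w}_t}$ ill-defined at that single point and trivializes the claim, since $\V{w}_t \in Y \cap S$ with probability one; I would either assume $\V{w}_t \notin S$ outright or, equivalently, note that the non-intersection event on the right-hand side is automatically empty in this case and the theorem holds vacuously once both sides are interpreted with the natural convention.
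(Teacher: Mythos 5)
Your proposal is correct and follows essentially the same route as the paper: translate by $-\V{w}_t$, observe that a linear subspace meets $S-\V{w}_t$ iff it meets the radial projection $\mathrm{proj}_{\V{w}_t}(S)$ onto the unit sphere (by closure of the subspace under positive scaling), and then invoke \Cref{thm:escape}. Your additional handling of closedness and of the degenerate case $\V{w}_t \in S$ is sound (for the closure step the one-sided containment $\{Y \cap \overline{T} = \emptyset\} \subseteq \{Y \cap T = \emptyset\}$ together with $w(\overline{T}) = w(T)$ already suffices, so the measure-zero argument is not needed), and goes beyond the paper, which states the reduction in a single sentence.
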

To summarise this result in the context of our application, given an arbitrary loss sub-level set $S(\epsilon)$, a training subspace of training dimension $d$ starting from an initialization 
$\V{w}_\mathrm{t}$ will hit the (convex hull) of the loss sublevel set with high probability 
when $d + d_\mathrm{local}(S(\epsilon), \V{w}_t) > D$, and will miss it (i.e have empty intersection) 
with high probability when $d + d_\mathrm{local}(S(\epsilon), \V{w}_t) < D$.  This analysis thus establishes the existence of a phase transition in the
success probability $\Ps$ in \cref{eq:ps}, 
and moreover establishes the \threshdim{} $d^*(\epsilon, t, \delta)$ for small values of $\delta$ in \cref{def:effDim}:
\begin{equation}
    d^*(S(\epsilon), \V{w}_t) = D - d_\mathrm{local}(S(\epsilon), \V{w}_t).
\end{equation}
Our theory provides several important insights on the nature of \threshdim{}. Firstly, small threshold training dimensions can only arise if the local angular dimension of the loss sublevel set $S(\epsilon)$ about the initialization $\V{w}_\mathrm{t}$ is close to the ambient dimension. Second, as $\epsilon$ increases, $S(\epsilon)$ becomes larger, with a larger $d_\mathrm{local}(S(\epsilon), \V{w}_t)$, and consequently a smaller \threshdim{}.  Similarly, if $\V{w}_\mathrm{t}$ is closer to $S(\epsilon)$, then $d_\mathrm{local}(S(\epsilon), \V{w}_t)$ will be larger, and the \threshdim{} will also be lower (see \cref{fig:cartoon_sphere_projection}).  This observation accounts for the observed decrease in \threshdim{} with increased burn-in time $t$. Presumably, burning in information into the initialization $\V{w}_\mathrm{t}$ for a longer time $t$ brings the initialization closer to the sublevel set $S(\epsilon)$, making it easier to hit with a random subspace of lower dimension. This effect is akin to staring out into the night sky in a single random direction and asking with what probability we will see the moon; this probability increases the closer we are to the moon.

\subsection{A paradigmatic loss landscape example: the quadratic well}
\label{sec:quadratic}

\begin{figure}[!ht]
  \centering
  \includegraphics[width=1.0\linewidth]{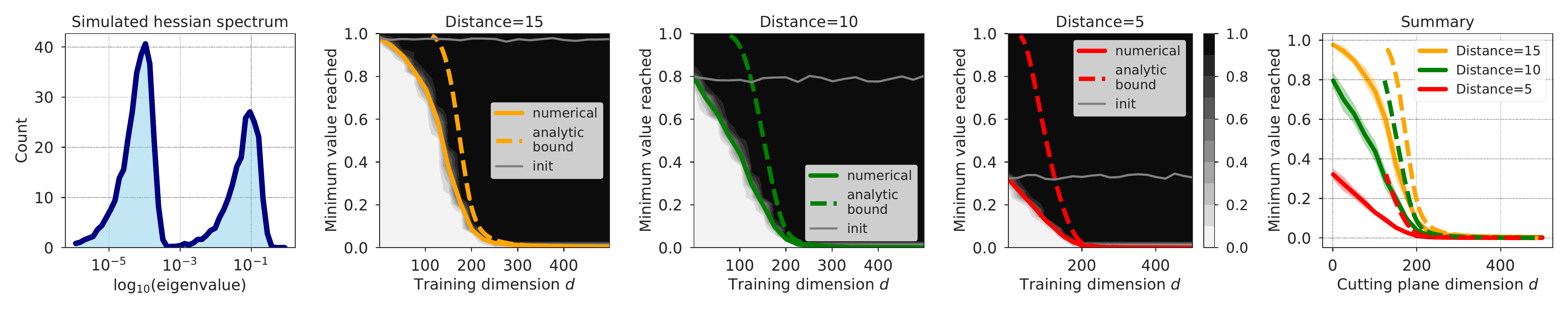}
  \includegraphics[width=1.0\linewidth]{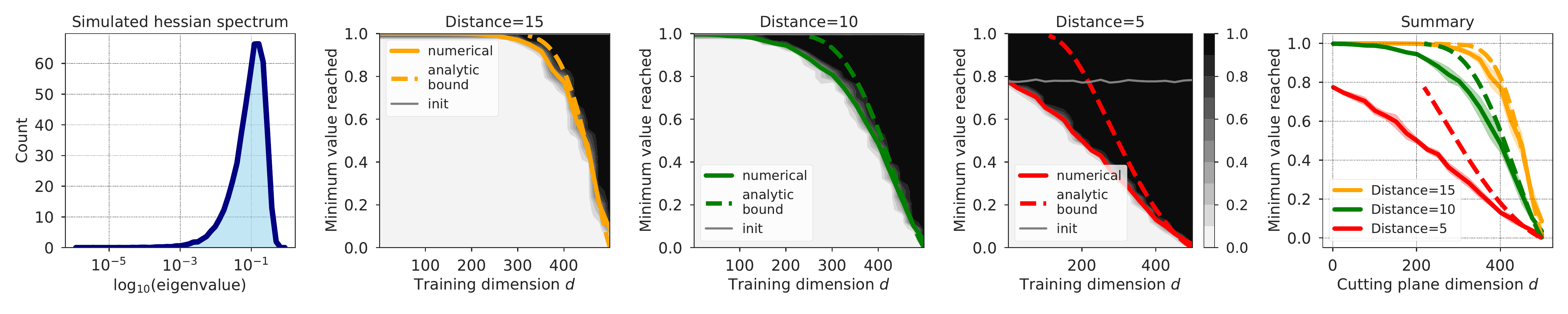}
  \vspace{-0.5cm}
  \caption{A comparison between simulated results and our analytic upper bound for \threshdim{} of sublevel sets on a synthetic quadratic well. The middle 3 columns show the success probability $P_s(d,\epsilon,R)$ as a function of $d$ and $\epsilon$ for three different values of the distance $R$ between initialization and the global minimum, clearly exhibiting a phase transition (black and white maps). This success probability is estimated from a numerical experiment across 10 runs and the estimated threshold training dimensions are shown as solid curves. Our analytic upper bounds on threshold training dimension obtained from our lower bound on local angular dimension in Eq.~\ref{eq:quadratic_analytics} are shown as dashed curves. The top row shows the case of a quadratic basin with a bimodal split of eigenvalues; the \localdim{} is approximately the number of long directions (small eigenvalues) and we start hitting low-loss sublevel sets at $D/2$ as expected. The bottom row shows the case of a continuous bulk spectrum. In both cases, \threshdim{} is lowered as the distance $R$ is decreased.  The upper bound is tighter when $\epsilon$ is close to 0, the regime of we are most interested in.}
  \label{fig:toy_model_quadratic_well}
\end{figure}

To illustrate our theory, we work out the paradigmatic example of a quadratic loss function $\cL(\V{w}) = \frac{1}{2} \V{w}^T \Mx{H} \V{w}$ where $\V{w} \in \bbR^d$ and $\Mx{H} \in \bbR^{D \times D}$ is a symmetric, positive definite Hessian matrix.  A sublevel set $S(\epsilon)$ of the quadratic well is an ellipsoidal body with principal axes along the eigenvectors $\hat{\V{e}}_i$ of $\Mx{H}$.  The radius $r_i$ along principal axis $\hat{\V{e}}_i$ obeys $\frac{1}{2} \lambda_i r_i^2 = \epsilon$ where $\lambda_i$ is the eigenvalue.  Thus $r_i = \sqrt{2 \epsilon/\lambda_i}$, and so a large (small) Hessian eigenvalue leads to a narrow (wide) radius along each principal axis of the ellipsoid. The overall squared Gaussian width of the sublevel set obeys $w^2(S(\epsilon)) \sim 2 \epsilon \Tr(\Mx{H}^{-1}) = \sum_i r_i^2$, where $\sim$ denotes bounded above and below by this expression times positive constants \citep{vershynin2018high}.

We next consider training within a random subspace of dimension $d$ starting from some initialization $\V{w}_0$.  To compute the probability the subspace hits the sublevel set $S(\epsilon)$, as illustrated in Fig.~\ref{fig:cartoon_sphere_projection}, we must project this ellipsoidal sublevel set onto the surface of the unit sphere centered at $\V{w}_0$.  The Gaussian width of this projection $\mathrm{proj}_{\V{w}_0}(S(\epsilon))$ depends on the distance $R \equiv ||\V{w}_0||$ from the initialization to the global minimum at $\V{w}=\V{0}$ (i.e. it should increase with decreasing $R$). We can develop a crude approximation to this width as follows.  Assuming $D \gg 1$, the direction $\hat{\V{e}}_i$ will be approximately orthogonal to $\V{w}_0$, so that $| \hat{\V{e}}_i \cdot \V{x}_0 | \ll R$. The distance between the tip of the ellipsoid at radius $r_i$ along principal axis $\V{e}_i$ and the initialization $\V{w}_0$ is therefore $\rho_i = \sqrt{R^2 + r_i^2}$. The ellipse's radius $r_i$ then gets scaled down to  approximately $r_i / \sqrt{R^2 + r_i^2}$ when projected onto the surface of the unit sphere. Note the subtlety in this derivation is that the point actually projected onto the sphere is where a line through the center of the sphere lies tangent to the ellipse rather than the point of fullest extent. As a result, $r_i / \sqrt{R^2 + r_i^2}$ provides a lower bound to the projected extent on the circle.  This is formalized in the appendix along with an explanation as to why this bound becomes looser with decreasing $R$.
Taken together, a lower bound on the local angular dimension of $S(\epsilon)$ about $\V{w}_0$ is:
\begin{equation}
d_{\mathrm{local}}(\epsilon,R) = w^2\big(\proj_{\V{w}_0}(S(\epsilon))\big) \gtrsim \sum_i \frac{r_i^2}{R^2 + r_i^2} \, ,
\label{eq:quadratic_analytics}
\end{equation}
where again $r_i = \sqrt{2 \epsilon/\lambda_i}$. In Fig.~\ref{fig:toy_model_quadratic_well}, we plot the corresponding upper bound on the \threshdim{}, i.e. $D - d_{\mathrm{local}}(\epsilon,R)$ alongside simulated results for two different Hessian spectra.
\vspace{-1em}
\section{Characterizing and comparing the space of pruning methods} \label{sec:lottery}
\vspace{-1em}

Training within random subspaces is primarily a scientific tool to explore loss landscapes. It further has the advantage that we can explain theoretically why the number of degrees of freedom required to train can be far fewer than the number of parameters, as described above.  However, there are many other pruning methods of interest. For example, the top row of \Cref{tab:Methods} focuses on pruning to axis aligned subspaces, starting from random weight pruning, to lottery tickets which use information from training to prune weights, and/or choose the initialization if not rewound to init. As one moves from left to right, one achieves better pruning (fewer degrees of freedom for a given accuracy). Our analysis can be viewed as relaxing the axis-aligned constraint to pruning to general subspaces (second row of \Cref{tab:Methods}), either not using training at all (random affine subspaces), or using information from training to only to choose the init (burn in affine subspaces). This analogy naturally leads to the notion of lottery subspaces described below (an analog of lottery tickets with axis-alignment relaxed to general subspaces) either rewound to init or not (last two entries of \Cref{tab:Methods}). We compare the methods we have theoretically analyzed (random and burn-in affine subspaces) to popular methods like lottery tickets rewound to init, and our new method of lottery subspaces, in an effort understand the differential efficacy of various choices like axis-alignment, initialization, and the use of full training information to prune. A full investigation of \cref{tab:Methods} however is the subject of future work.

\begin{table*}[h]
  \centering
  \vspace{-0.25cm}
  \caption{Taxonomy of Pruning Methods.}
  \vspace{0.2cm}
  \begin{tabular}{c||c|c|c|c}
         &
         \begin{tabular}{@{}c@{}} \bf Training \\ \bf not used\end{tabular} &
        \begin{tabular}{@{}c@{}} \bf Training used \\ \bf for init. only\end{tabular} &
        \begin{tabular}{@{}c@{}} \bf Training used \\ \bf for pruning only \end{tabular} &
        \begin{tabular}{@{}c@{}} \bf Training used for\\ \bf init. and pruning \end{tabular} \\ 
        \hline
        \hline
        \begin{tabular}{@{}c@{}} \bf Axis-aligned \\ \bf subspaces\end{tabular} & 
        \begin{tabular}{@{}c@{}} Random weight \\ pruning \end{tabular} & 
        \begin{tabular}{@{}c@{}} Random weight \\ pruning at step $t$ \end{tabular} & 
        \begin{tabular}{@{}c@{}} Lottery tickets, \\ rewound to init. \end{tabular} & 
        \begin{tabular}{@{}c@{}} Lottery tickets, \\ rewound to step $t$ \end{tabular} \\
        \hline
        \begin{tabular}{@{}c@{}} \bf General \\ \bf subspaces\end{tabular} & 
        \begin{tabular}{@{}c@{}} Random affine \\ subspaces \end{tabular} & 
        \begin{tabular}{@{}c@{}} Burn-in affine \\ at step $t$ \end{tabular} & 
        Lottery subspaces & 
        \begin{tabular}{@{}c@{}} Lottery subspaces \\ at step $t$ \end{tabular} \\
    \end{tabular}
  \label{tab:Methods}
\end{table*}

\paragraph{Lottery subspaces.} We first train the network in the full space starting from an initialization $\V{w}_0$.  We then form the matrix $\Mx{U}_d \in \bbR^{D \times d}$ whose $d$ columns are the top $d$ principal components of entire the training trajectory $\V{w}_{0:T}$ (see Appendix for details). We then train within the subspace $\V{w}(\V{\theta}) = \Mx{U}_d \V{\theta} + \V{w}_t$
starting from a rewound initialization $\V{w}_t$ ($t=0$ is rewinding to the original init).

Since the subspace is optimized to match the top $d$ dimensions of the training trajectory, we expect lottery subspaces to achieve much higher accuracies for a given training dimension than random or potentially even burn-in affine subspaces.  
This expectation is indeed borne out in Fig.~\ref{fig:MainExperiment} (purple lines above all other lines). 
Intriguingly, {\it very} few lottery subspace training dimensions (in the range of $20$ to $60$ depending on the dataset and architecture) are required to attain full accuracy, and thus
lottery subspaces can set a (potentially optimistic) target for what accuracies might be attainable by practical network pruning methods as a function of training dimension. 

\begin{figure*}[!ht]
  \vspace{-1em}
  \centering
  \includegraphics[width=0.95\linewidth]{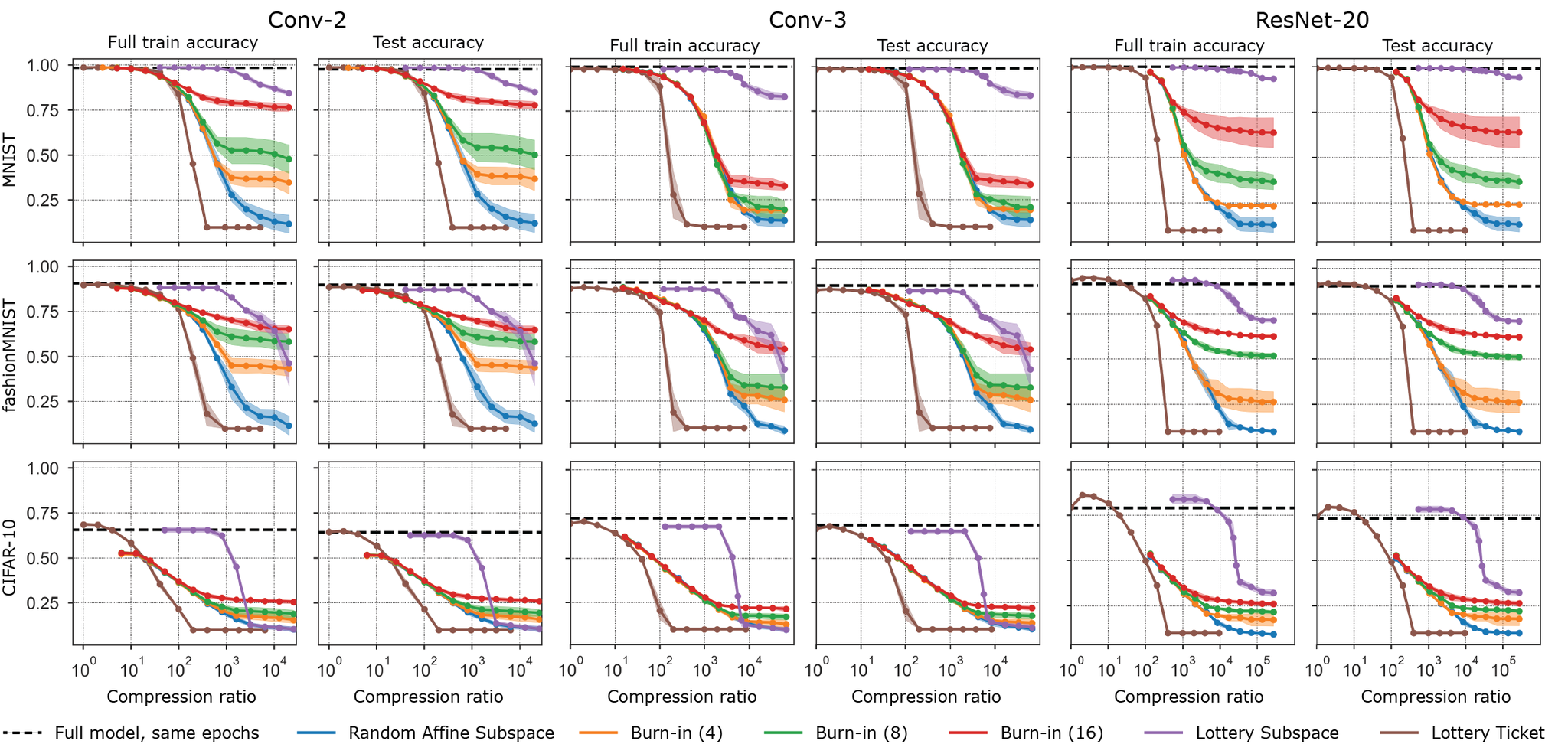}
  \vspace{-1em}
  \caption{Accuracy vs. compression ratio for the same data. Compression ratio is defined the number of parameters in the full model over the dimension of the subspace ($D/d$). The dimensions of the full parameter space for the experiments with CIFAR-10 are 25.6k for \texttt{Conv-2}, 66.5k for \texttt{Conv-3}, and 272.5k for \texttt{ResNet20}.  On the other two datasets, the full parameter space is 20.5k for \texttt{Conv-2}, 61.5k for \texttt{Conv-3}, and 272.2k for \texttt{ResNet20}. The curve for each lottery ticket experiment summarizes data for at least 5 runs.  For all other experiments, the curve summarizes data for 10 runs for \texttt{Conv-2}, 5 runs for \texttt{Conv-3}, and 3 runs for \texttt{ResNet20}. Black dotted lines are the accuracy of the full model run for the same number of epochs.}
  \label{fig:CompressionRatio}
 \end{figure*}

 \vspace{-2em}
\paragraph{Empirical comparison of pruning methods.} \Cref{fig:CompressionRatio} presents empirical results comparing a subset of the methods in \cref{tab:Methods}: random affine subspaces, burn-in affine subspaces, lottery subspaces, and lottery tickets plotted against model compression ratio (defined as parameters in full model over parameters, or training dimension, in restricted model).  The lottery tickets were constructed by training for 2 epochs, performing magnitude pruning of weights and biases, rewinding to initialization, and then training for the same number of epochs as the other methods.  Note that lottery tickets are created by pruning the full model (increasing compression ratio) in contrast to all other methods which are built up from a single dimension (decreasing compression ratio).
We observe lottery subspaces significantly outperform random subspaces and lottery tickets at low training dimensions (high compression ratios), and we explore the spectrum of these spaces in more detail in the Appendix.

The comparison to lottery tickets at low compression ratios is limited by the fact that it is computationally expensive to project to higher dimensional subspaces and thus the highest training dimension we used was $4096$.  In the regions where the experiments overlap, the lottery tickets do not outperform random affine subspaces, indicating that they are not gaining an advantage from the training information they utilize.  A notable exception is \texttt{Conv-2} on CIFAR-10 in which the lottery tickets do outperform random affine subspaces. Finally, we note lottery tickets do not perform well at high compression ratios due to the phenomenon of layer collapse, where an entire layer gets pruned. 

\vspace{-1em}
\section{Conclusion} \label{sec:conclusion}
\vspace{-1em}
The surprising ability of pruning methods like lottery tickets to achieve high accuracy with very few well chosen parameters, and even higher accuracy if not rewound to init, but to a later point in training, has garnered great interest in deep learning, but has been hard to analyze.  In this paper we focused on gaining theoretical insight into when and why training within a random subspace starting at different inits (or burn-ins) along a full training trajectory can achieve a given low loss $\epsilon$.  We find that this can occur only when the local angular dimension of the loss sublevel set $S(\epsilon)$ about the init is high, or close to the ambient dimension $D$. Our theory also explains geometrically why longer burn-in lowers the the number of degrees of freedom required to train to a given accuracy.   This is analogous to how rewinding to a later point in training reduces the size of lottery tickets, and indeed may share a similar mechanism.  
Overall, these theoretical insights and comparisons begin to provide a high dimensional geometric framework to understand and assess the efficacy of a wide range of network pruning methods at or beyond initialization.  

\section*{Acknowledgements}
B.W.L. was supported by the Department of Energy Computational Science Graduate Fellowship program (DE-FG02-97ER25308).
S.G. thanks the Simons Foundation, NTT Research and an NSF Career award for funding while at Stanford.

\bibliography{references}
\bibliographystyle{iclr2022_conference}

\newpage

\appendix

\section{Experiment supplement}

The core experiment code is available on Github: \url{https://github.com/ganguli-lab/degrees-of-freedom}. The three top-level scripts are \texttt{burn\_in\_subspace.py}, \texttt{lottery\_subspace.py}, and \texttt{lottery\_ticket.py}.  Random affine experiments were run by setting the parameter \texttt{init\_iters} to 0 in the burn-in subspace code. The primary automatic differentiation framework used for the experiments was JAX \cite{jax2018github}.  The code was developed and tested using JAX v0.1.74, JAXlib v0.1.52, and Flax v0.2.0 and run on an internal cluster using NVIDIA TITAN Xp GPU's.

\begin{figure}[!ht]
  \centering
  \includegraphics[width=0.95\linewidth]{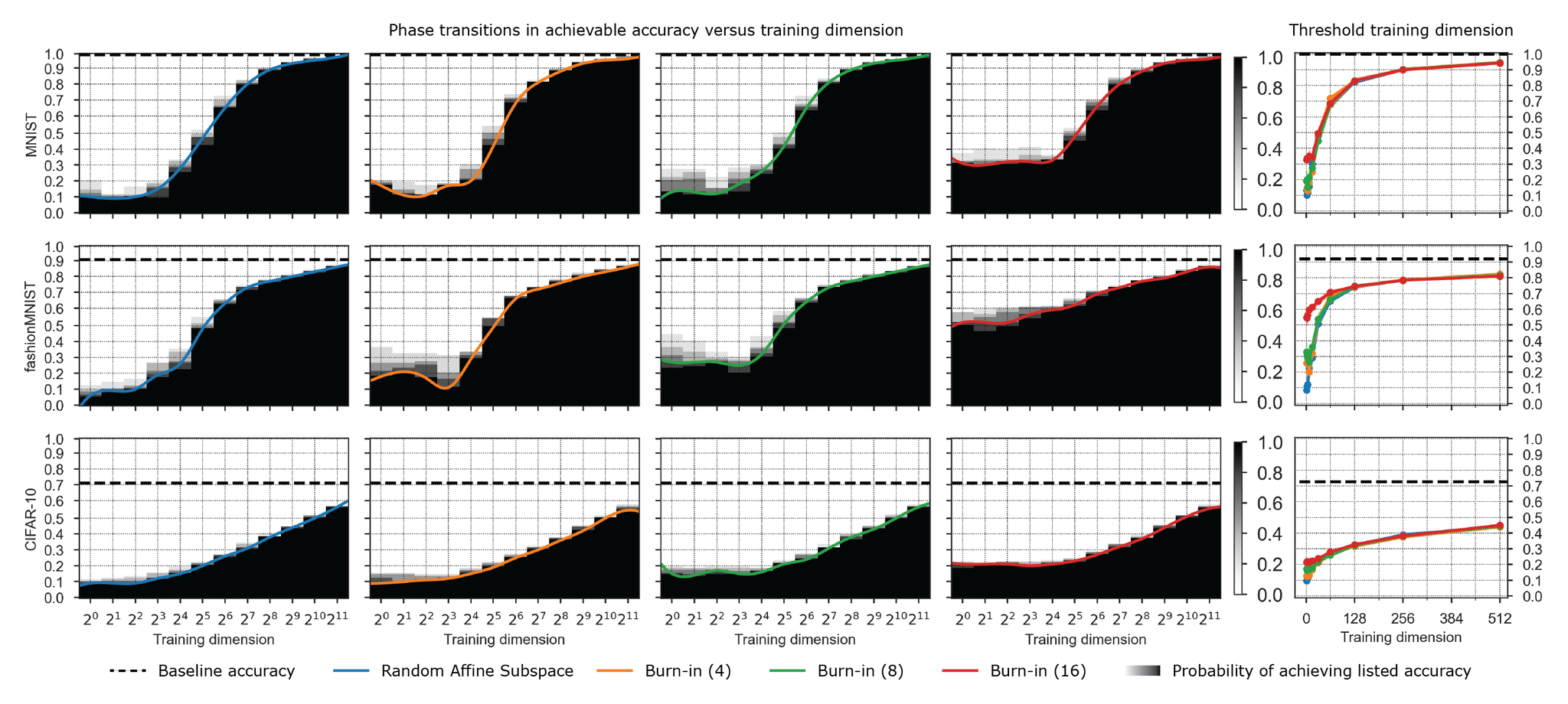}
  \caption{An empirical phase transition in training success on 3 datasets (3 rows) for a \texttt{Conv-3} comparing random affine subspaces (column 1) and burn-in affine subspaces with $t=4,8,16$ burn-in steps (columns 2,3,4).
  The black-white color maps indicate the empirically measured success probability $\Ps$ in \eqref{eq:ps} in hitting a training accuracy super-level set. This success probability is estimated by training on $5$ runs at every training dimension $d$.  The horizontal dashed line represents the baseline accuracy obtained by training the full model for the same number of epochs. The colored curves indicate the \threshdim{} $d^*(\epsilon, t, \delta)$ in \cref{def:effDim} for $\delta=0.2$.}
  \label{fig:empirical_probability_conv3}
\end{figure}

\begin{figure}[!ht]
  \centering
  \includegraphics[width=0.95\linewidth]{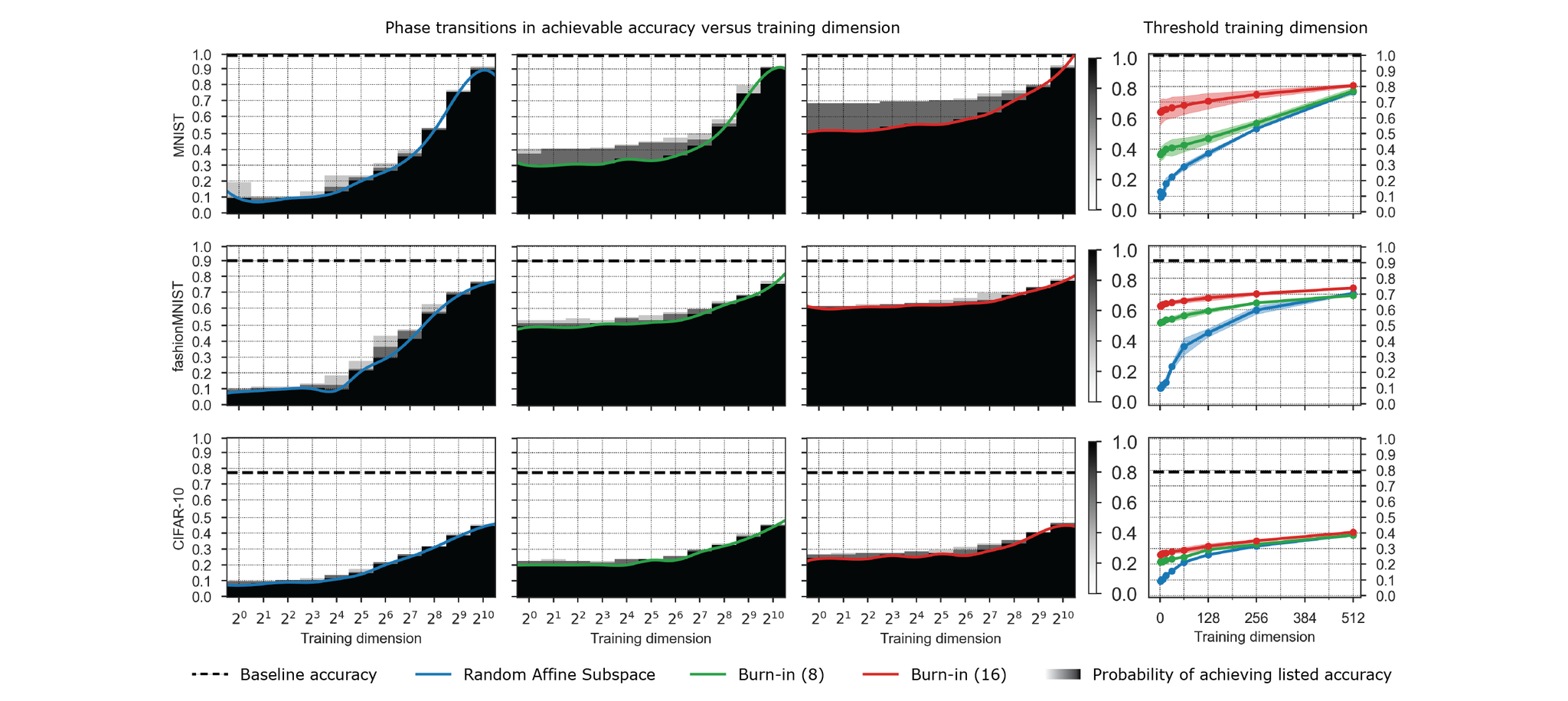}
  \caption{An empirical phase transition in training success on 3 datasets (3 rows) for a \texttt{ResNet20} comparing random affine subspaces (column 1) and burn-in affine subspaces with $t=8,16$ burn-in steps (columns 2,3).
  The black-white color maps indicate the empirically measured success probability $\Ps$ in \eqref{eq:ps} in hitting a training accuracy super-level set. This success probability is estimated by training on $3$ runs at every training dimension $d$.  The horizontal dashed line represents the baseline accuracy obtained by training the full model for the same number of epochs. The colored curves indicate the \threshdim{} $d^*(\epsilon, t, \delta)$ in \cref{def:effDim} for $\delta=0.33$.}
  \label{fig:empirical_probability_Res}
\end{figure}

\Cref{fig:empirical_probability_conv3,fig:empirical_probability_Res} show the corresponding empirical probability plots for the two other models considered in this paper: \texttt{Conv-3} and \texttt{ResNet20}.  These plots are constructed in the same manner as \cref{fig:empirical_probability} except a larger value of $\delta$ was used since fewer runs were conducted ($\delta$ was always chosen such that all but one of the runs had to successfully hit a training accuracy super-level set).  The data in these plots is from the same runs as \cref{fig:MainExperiment,fig:CompressionRatio}.

\subsection{Comparison to Linearized Networks (Neural Tangent Kernel)}

For general neural networks, we do not expect to be able bound the local angular dimension; instead, we use the relationship between the \threshdim{} and \localdim{} to empirically probe this important property of the loss landscape as in the experiments of \cref{fig:MainExperiment}.
For a single basin, we can consider the second-order approximation to the landscape at the optimum which yields a quadratic well based on the spectrum of the Hessian at this point, 
corresponding to the experiments presented in \cref{fig:toy_model_quadratic_well} using a well with the appropriate spectrum.
In this section, we consider how linearizing via the neural tangent kernel (NTK) can be used as a tool to better approximate the landscape of the network around a single basin while being potentially more amenable to theoretical characterization than the full network.

\begin{figure}[!ht]
  \centering
  \includegraphics[width=0.95\linewidth]{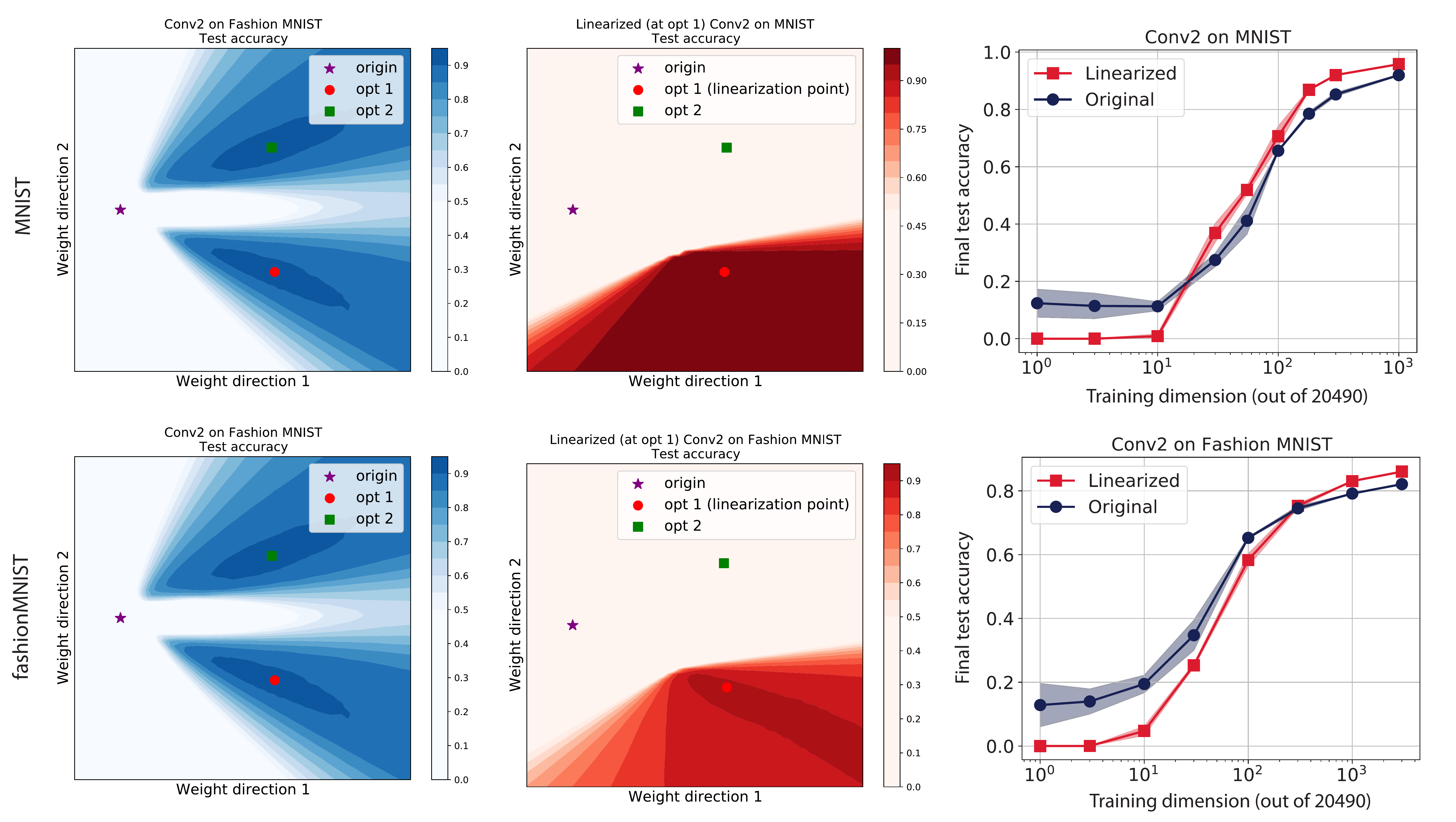}
  \caption{Empirical comparison of the threshold training dimension for the full landscape vs. the linearized model around a specific local optimum.  These experiments were run on the \texttt{Conv-2}
  model; the top row is on MNIST and the bottom is on Fashion MNIST. The column show a 2-dimensional cut of the test loss landscape defined by the initialization point and two optimum found via training.  The second column shows the same cut but for the linearized model around optimum 1.  Finally, the right column shows the threshold training dimension for both the original and linearized model.}
  \label{fig:ntk_experiment}
\end{figure}

For this experiment we first train in the full network starting from initialization $\V{w}_0 \in \mathbb{R}^D$ until we find a local optimum $\V{w}_{\text{opt}}$.  Instead of using the second-order approximation around this optimum given by the Hessian, we linearize the network around $\V{w}_{\text{opt}}$ via the NTK \citep{jacot2018neural}.  In essence, if $f(\V{w}, \V{x})$ is the function that outputs the $i$th logit for a given input $\V{x}$ we instead consider the following approximation which is a linear function in $\V{w}$:

\begin{equation*}
    f(\V{w}_{\text{opt}} + \V{w}, \V{x}) \approx f(\V{w}_{\text{opt}}, \V{x}) + \left [ \nabla_{\V{w}} f(\V{w}_{\text{opt}}, \V{x}) \right ]^{\text{T}} \V{w} := A(\V{w}_{\text{opt}}, \V{x}) + \Mx{B}(\V{w}_{\text{opt}}, \V{x}) \cdot \V{w}
\end{equation*}

At $\V{w}_{\text{opt}}$, the full and linearized network are identical; however, in the linearized network there is only one basin which is around $\V{w}_{\text{opt}}$.  We then compare these networks by returning to the initialization point $\V{w}_0$ and perform the experiment training within random affine subspaces across a range of dimensions in both the full and linearized network.

\Cref{fig:ntk_experiment} shows the results of this experiment for both MNIST and Fashion MNIST using the model \texttt{Conv-2}. In these two settings, the threshold training dimension of the linearized model approximates this property of the full model fairly well, indicating promise as a useful approximation to the true loss landscape around a basin.  Thus, we consider theoretically characterizing the local angular dimension of these linearized models interesting future work.

\subsection{Spectra of lottery subspaces}

\begin{figure}[!ht]
  \centering
  \includegraphics[width=0.48\linewidth]{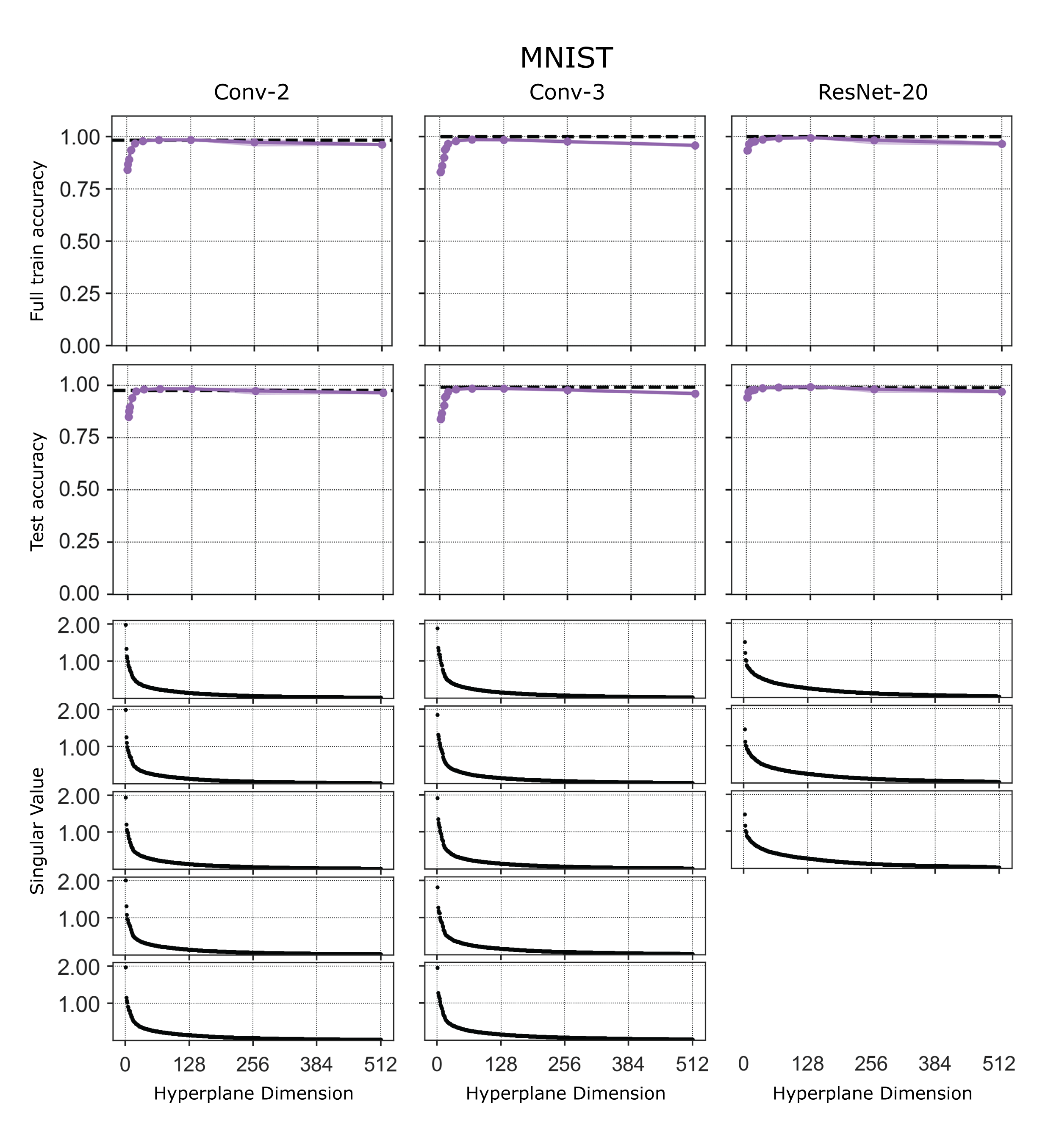}
  \includegraphics[width=0.48\linewidth]{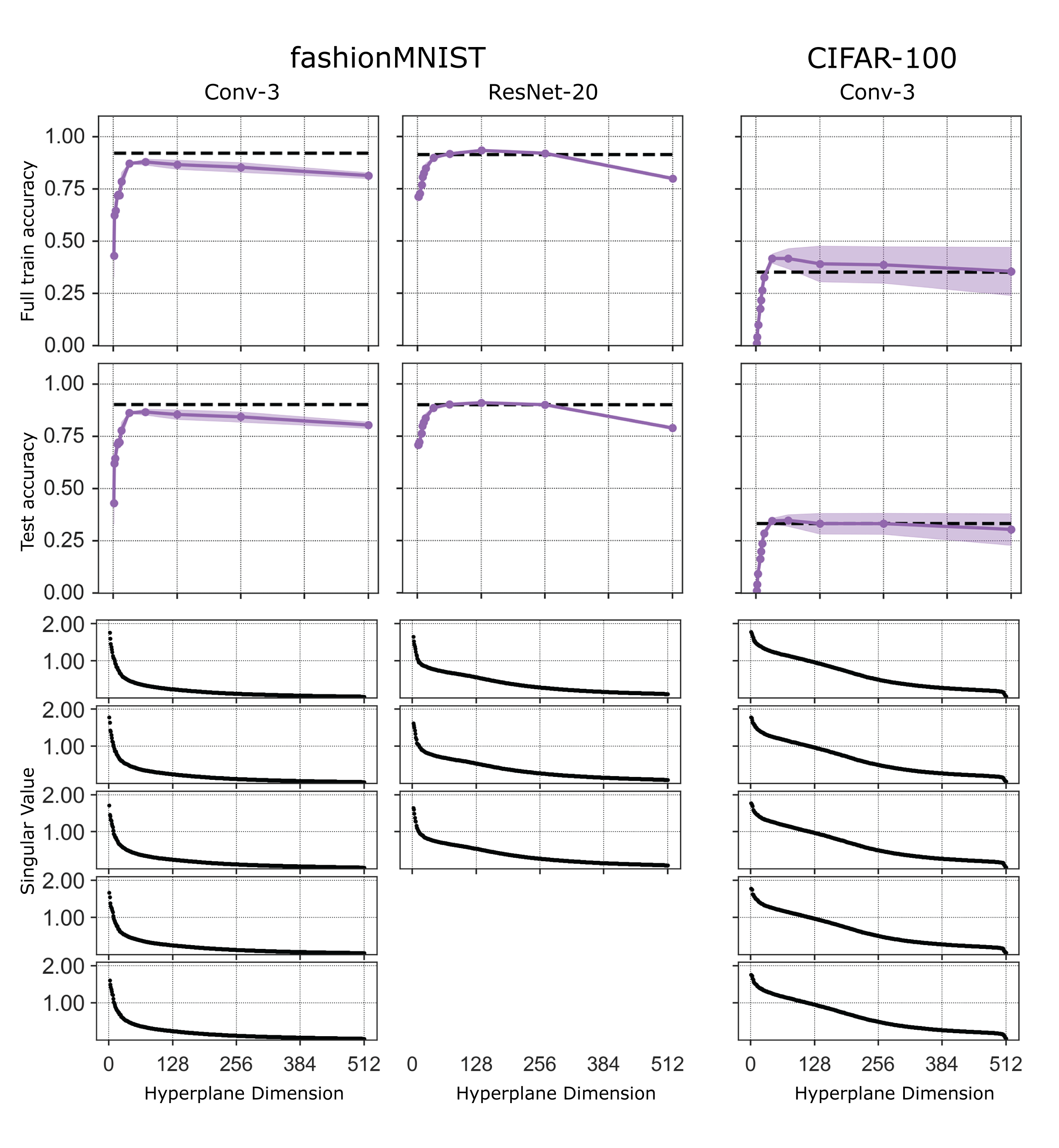}
  \caption{In the main text, we plot a running max accuracy applied to the individual runs because the subspaces are nested and we are only concerned with the existence of an intersection. The accuracies are plotted here without this preprocessing step for comparison with the spectra. \textbf{Left:} Singular values for the lottery subspace experiments on MNIST.  \textbf{Right:} Singular values for the lottery subspace experiments on Fashion MNIST and an additional run on CIFAR-100. Only the first 5 spectra (out of 10) are shown for \texttt{Conv-2}. Directions were added in order of descending singular values.}
  \label{fig:lottery_spectra_mnist}
\end{figure}

\begin{figure}[!ht]
  \centering
  \includegraphics[width=0.45\linewidth]{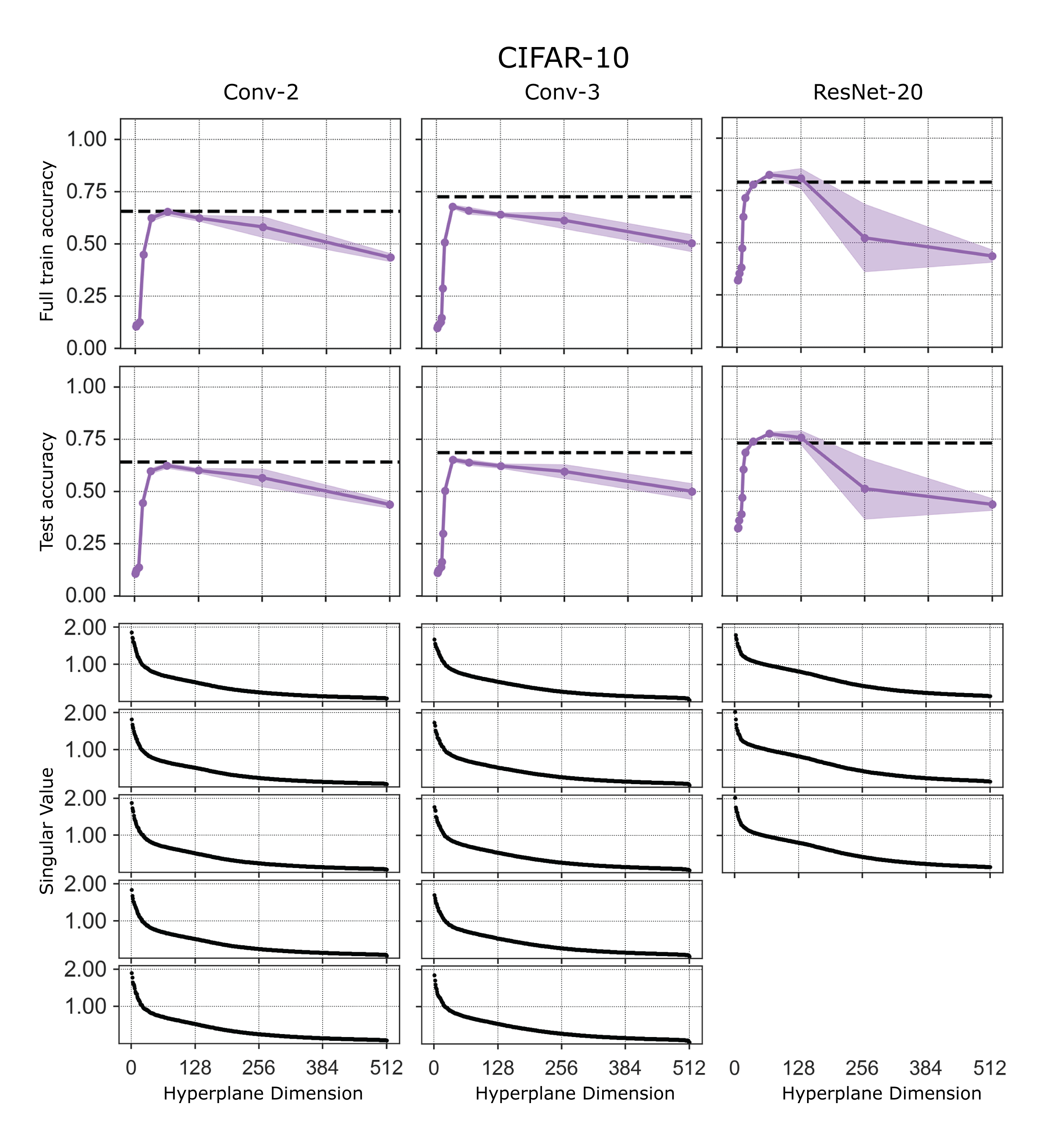}
  \includegraphics[width=0.38\linewidth]{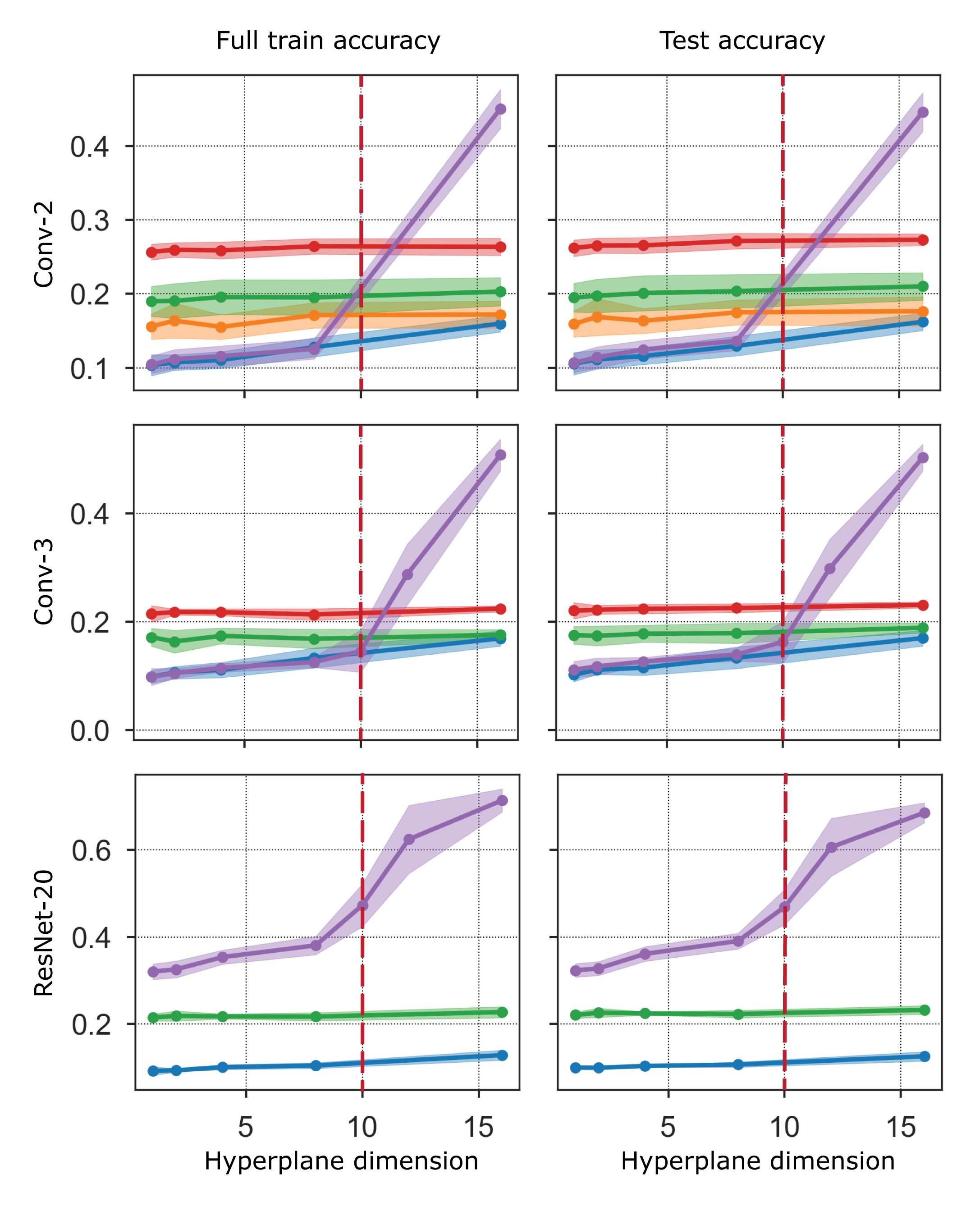}
  \caption{In the main text, we plot a running max accuracy applied to the individual runs beacuse the subspaces are nested and we are only concerned with the existence of an intersection. The accuracies are plotted here without this preprocessing step for comparison with the spectra. \textbf{Left:} Singular values for the lottery subspace experiments on CIFAR-10. Only the first 5 spectra (out of 10) are shown for \texttt{Conv-2}. Directions were added in order of descending singular values. \textbf{Right:} Lottery subspaces display accuracy transition around $d = 10$ for the dataset CIFAR-10. Provides additional evidence for the conjecture that the shaprest directions of the Hessian are each associated with a class but no learning happens in them.}
  \label{fig:lottery_spectra_cifar10}
\end{figure}

In our experiments, we formed lottery subspaces by storing the directions traveled during a full training trajectory and then finding the singular value decomposition of this matrix.  As we increased the subspace dimension, directions were added in order of descending singular values.  \Cref{fig:lottery_spectra_mnist} and the left panel of \cref{fig:lottery_spectra_cifar10} show the associated spectra for the results presented in \cref{fig:MainExperiment,fig:CompressionRatio}.  Note that in the main text figures we plot the accuracies as a running max over the current and smaller dimensions.  This is because the subspaces are nested such that if we increase the dimension and find a point of lower accuracy, it indicates a failure of optimization to find the intersection as the higher accuracy point is still in the subspace.  In these supplement figures, we plot the recorded accuracies without this processing step for completeness.  We see that in several cases this optimization failure did occur as we moved to higher dimensions; we suspect this is related to how quickly the singular values fall off meaning the higher dimensions we add are much less informative.

The spectra are aligned with the train and test accuracy plots such that the value directly below a point on the curve corresponds to the singular value of the last dimension added to the subspace.  There were 10 runs for \texttt{Conv-2}, 5 for \texttt{Conv-3}, and 3 for \texttt{ResNet20}.  Only the first 5 out of 10 runs are displayed for the experiments with \texttt{Conv-2}.  No significant deviations were observed in the remaining runs.

From these plots, we observe that the spectra for a given dataset are generally consistent across architectures.  In addition, the decrease in accuracy after a certain dimension (particularly for CIFAR-10) corresponds to the singular values of the added dimensions falling off towards 0.

The right panel of \cref{fig:lottery_spectra_cifar10} shows a tangential observation that lottery subspaces for CIFAR-10 display a sharp transition in accuracy at $d = 10$.  This provides additions evidence for the conjecture explored by \citet{gur2018gradient}, \citet{fort2019emergent}, and \citet{papyan2020traces} that the sharpest directions of the Hessian and the most prominent logit gradients are each associated with a class. Very little learning happens in these directions, but during optimization you bounce up and down along them so that the are prominent in the SVD of the gradients. This predicts exactly the behavior observed.

\subsection{Accuracy of Burn-in Initialization}

\Cref{fig:BurnInInit} shows a subset of the random affine and burn-in affine subspace experiments with a value plotted at dimension 0 to indicate the accuracy of the random or burn-in initialization.  This is to give context for what sublevel set the burn-in methods are starting out, enabling us to evaluate whether they are indeed reducing the \threshdim{} of sublevel sets with higher accuracy.  In most cases, as we increase dimension the burn-in experiments increase in accuracy above their initialization and at a faster pace than the random affine subspaces.  A notable exception is \texttt{Conv-3} on MNIST in which the burn-in methods appear to provide no advantage.

\begin{figure*}[!ht]
  \centering
  \includegraphics[width=0.85\linewidth]{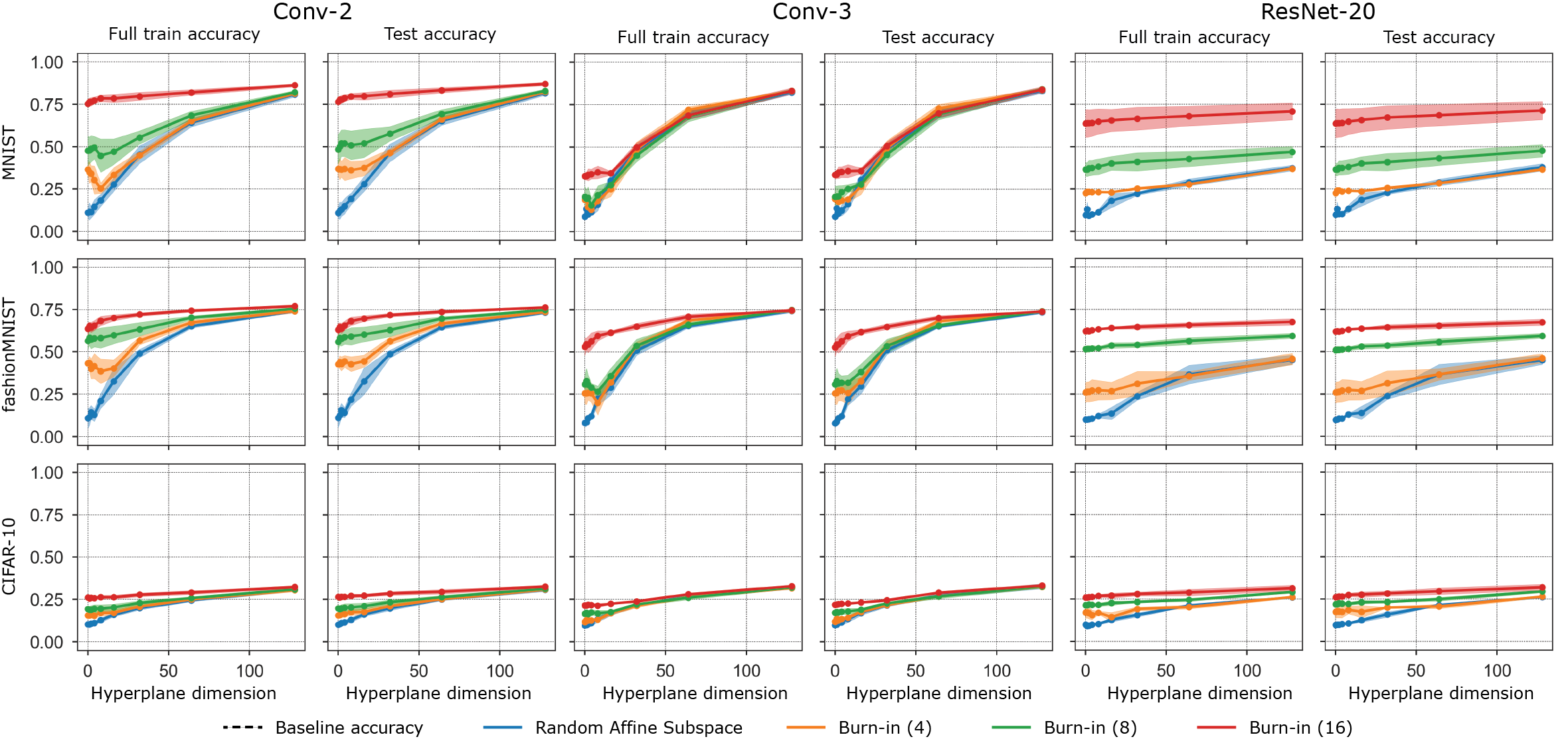}
  \caption{First 128 dimensions for a subset of the random affine and burn-in affine subspace experiments.  The plots include a value at dimension 0 which indicates the accuracy of the random initialization or the burn-in initialization.}
  \label{fig:BurnInInit}
\end{figure*}

\subsection{Hyperparameters}

Random hyperplanes were chosen by sampling a $D \times d$ matrix of independent, standard Gaussians and then normalizing the columns to 1.  This is equivalent to sampling uniformly from the Grassmanian as required by \cref{thm:mainTheorem}. Optimization restricted to an affine subspace was done using \texttt{Adam} \cite{kingma2014adam} with $\beta_1 = 0.9$, $\beta_2 = 0.999$, and $\epsilon = 10^{-7}$. We explored using $5 \cdot 10^{-2}$ and $10^{-2}$ for the learning rate but $5 \cdot 10^{-2}$ worked substantially better for this restricted optimization and was used in all experiments; a batch size of 128 was used.  The full model runs used the better result of $5 \cdot 10^{-2}$ and $10^{-2}$ for the learning rate. \texttt{ResNet20v1} was run with on-the-fly batch normalization \cite{ioffe2015batch}, meaning we simply use the mean and variance of the current batch rather than maintaining a running average.  \Cref{tab:hyperparameters} shows the number of epochs used for each dataset and architecture combination across all experiments.  3 epochs was chosen by default and then increased if the full model was not close to convergence.

\begin{table*}[!ht]
  \centering
  \vspace{-0.25cm}
  \caption{Epochs Used for Experiments.}
  \vspace{0.2cm}
  \begin{tabular}{c|c|c|c|c|c}
        \multicolumn{1}{c|}{\bf Dataset} &
        \multicolumn{1}{|c|}{\bf MNIST} &
        \multicolumn{1}{c|}{\bf Fashion MNIST} &
        \multicolumn{1}{c|}{\bf CIFAR-10} &
        \multicolumn{1}{c|}{\bf CIFAR-100} &
        \multicolumn{1}{c}{\bf SVHN} \\ 
        \hline
        \hline
        \texttt{Conv-2} & 3 epochs & 3 epochs & 4 epochs & - & 4 epochs \\ 
        \texttt{Conv-3} & 4 epochs & 5 epochs & 5 epochs & 5 epochs & - \\ 
        \texttt{ResNet20} & 3 epochs & 3 epochs & 4 epochs & - & - \\ 
    \end{tabular}
  \label{tab:hyperparameters}
\end{table*}

\section{Theory supplement}

In this section, we provide additional details for our study of the \threshdim{} of the sublevel sets of quadratic wells.  We also derive the \threshdim{} of affine subspaces to provide further intuition.

\subsection{Proof: Gaussian width of sublevel sets of the quadratic well}

In our derivation of \cref{eq:quadratic_analytics}, we employ the result that the Gaussian width squared of quadratic well sublevel sets is bounded as $w^2(S(\epsilon)) \sim 2 \epsilon \Tr(\Mx{H}^{-1}) = \sum_i r_i^2$, i.e. bounded above and below by this expression times positive constants.  This follows from well-established bounds on the Gaussian width of an ellipsoid which we now prove.

In our proof, we will use an equivalent expression for the Gaussian width of set $S$:

\begin{equation*}
    w(S) := \frac{1}{2} \bbE \sup_{\V{x}, \V{y} \in S} \langle \V{g}, \V{x} - \V{y} \rangle = \bbE \sup_{\V{x} \in S} \langle \V{g}, \V{x} \rangle, \quad \V{g} \sim \mathcal{N}(\V{0}, \Mx{I}_{D \times D}).
\end{equation*}

\newpage

\begin{lemma}
[Gaussian width of ellipsoid] Let $\mathcal{E}$ be an ellipsoid in $\bbR^D$ defined by the vector $\V{r} \in \bbR^D$ with strictly positive entries as:
\begin{equation*}
    \mathcal{E} := \left \{ \V{x} \in \bbR^D \: \left |  \: \sum_{j = 1}^D \frac{x_j^2}{r_j^2} \leq 1 \right \} \right .
\end{equation*}
Then $w(\mathcal{E})^2$ or the Gaussian width squared of the ellipsoid satisfies the following bounds:
\begin{equation*}
    \sqrt{ \frac{2}{\pi}} \sum_{j = 1}^D r_j^2 \leq w(\mathcal{E})^2 \leq \sum_{j = 1}^D r_j^2
\end{equation*}
\end{lemma}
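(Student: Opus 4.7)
The first step is to compute the supremum in the definition of $w(\mathcal{E})$ in closed form. Since the ellipsoid is symmetric about the origin, the two-sided expression collapses to $w(\mathcal{E}) = \mathbb{E}\sup_{x\in\mathcal{E}}\langle g,x\rangle$. Changing variables to $y_j = x_j/r_j$ rewrites the constraint as $\|y\|_2\le 1$, and Cauchy--Schwarz (or a Lagrange multiplier) gives $\sup_{x\in\mathcal{E}}\langle g,x\rangle = \|\mathrm{diag}(r)\,g\|_2 = \sqrt{\sum_j r_j^2 g_j^2}$. Thus the problem reduces to bounding $\mathbb{E}\sqrt{X}$ above and below, where $X := \sum_j r_j^2 g_j^2$ is a weighted $\chi^2$ random variable with $\mathbb{E}X = \sum_j r_j^2$.

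The upper bound is immediate: Jensen's inequality applied to the concave map $t\mapsto\sqrt{t}$ gives $\mathbb{E}\sqrt{X}\le\sqrt{\mathbb{E}X}=\sqrt{\sum_j r_j^2}$, and squaring yields $w(\mathcal{E})^2\le\sum_j r_j^2$. This step is routine and requires no further structure than concavity.

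The lower bound is the substantive part and the place I expect the main obstacle: we need a reverse comparison of the form $(\mathbb{E}\sqrt{X})^2\ge c\,\mathbb{E}X$ with a specific universal constant. I would try two complementary routes. \textbf{Route A (Gaussian concentration).} The map $g\mapsto\|\mathrm{diag}(r)\,g\|_2$ is $(\max_j r_j)$-Lipschitz, so the Gaussian Poincar\'e inequality gives $\mathrm{Var}(\sqrt{X})\le\max_j r_j^2$. Combined with $(\mathbb{E}\sqrt{X})^2=\mathbb{E}X-\mathrm{Var}(\sqrt{X})$, this yields $(\mathbb{E}\sqrt{X})^2\ge\sum_j r_j^2-\max_j r_j^2$, which has the right scaling and is strong when no single $r_j$ dominates, but does not by itself produce a clean universal constant. \textbf{Route B (Khintchine--Kahane for Gaussian seminorms).} For any Gaussian seminorm $N(g)=\|Ag\|_2$, the $L^1$ and $L^2$ norms are comparable with a sharp universal ratio; a clean derivation uses the subordination identity $\sqrt{y} = \frac{1}{\sqrt{\pi}}\int_0^\infty(1-e^{-ty})\,t^{-3/2}\,dt$ together with the Laplace transform $\mathbb{E}e^{-tX}=\prod_j(1+2tr_j^2)^{-1/2}$, which reduces the desired inequality to a one-variable analytic estimate. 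The hard part in either route is pinning down the sharp constant; I would finish by observing that the ratio $(\mathbb{E}\sqrt{X})^2/\mathbb{E}X$ is minimized in the extremal "degenerate" direction (one $r_j$ dominates, effectively reducing to $D=1$), so a convexity/reduction argument in the $r_j$'s transfers the one-dimensional bound to arbitrary $D$, delivering the stated conclusion.
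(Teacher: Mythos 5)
Your reduction of the width to $\mathbb{E}\sqrt{X}$ with $X=\sum_j r_j^2 g_j^2$ is correct (and computing the supremum in closed form via $\sup_{x\in\mathcal{E}}\langle g,x\rangle=\|\mathrm{diag}(r)g\|_2$ is cleaner than the paper, which only chains Cauchy--Schwarz inequalities), and your Jensen upper bound coincides with the paper's. The genuine gap is the lower bound, which you flag as the hard part but never actually establish. Route A (Gaussian Poincar\'e) gives $(\mathbb{E}\sqrt{X})^2\ge\sum_j r_j^2-\max_j r_j^2$, which collapses to $0$ exactly in the extremal case where one $r_j$ dominates, so it cannot produce a universal multiplicative constant. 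Route B and the closing ``convexity/reduction to the degenerate direction'' are only sketches: the claim that $(\mathbb{E}\sqrt{X})^2/\mathbb{E}X$ is minimized in the one-dimensional degenerate limit is precisely the inequality that needs proving, and no argument is supplied for it. The paper closes this with a short elementary device you do not use: write $g_i=\epsilon_i|g_i|$ with $\epsilon_i$ i.i.d.\ Rademacher, condition on $\epsilon$ and apply Jensen to replace $|g_i|$ by $\mathbb{E}|g_i|=\sqrt{2/\pi}$ inside the supremum, yielding the comparison $w(S)\ge\sqrt{2/\pi}\;\mathbb{E}\sup_{x\in S}\sum_i\epsilon_i x_i$ for any origin-symmetric set; it then evaluates the Rademacher width of the ellipsoid \emph{exactly} as $\bigl(\sum_j r_j^2\bigr)^{1/2}$ (Cauchy--Schwarz for the upper bound, the explicit point $x_i=r_i^2/\bigl(\sum_j r_j^2\bigr)^{1/2}$ for attainment). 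That symmetrization step is the missing idea.

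A further caution on the constant: the degenerate $D=1$ case has $w(\mathcal{E})^2=(2/\pi)r^2\approx 0.637\,r^2$, which is strictly smaller than $\sqrt{2/\pi}\,r^2\approx 0.798\,r^2$, so the lower bound as displayed in the lemma is not attainable and what both the paper's proof and your intended endgame actually deliver is $w(\mathcal{E})^2\ge(2/\pi)\sum_j r_j^2$. Do not spend effort chasing the constant $\sqrt{2/\pi}$ on the squared width; it appears to be a typo in the statement (the square root belongs on $w(\mathcal{E})$, not on $w(\mathcal{E})^2$).
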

\begin{proof}
Let $\V{g} \sim \mathcal{N}(\V{0}, \Mx{I}_{D \times D})$. Then we upper-bound $w(\mathcal{E})$ by the following steps:
\begin{align*}
    w(\mathcal{E}) &= \bbE_{\V{g}} \left [ \sup_{\V{x} \in \mathcal{E}} \sum_{i = 1}^D g_i x_i \right ] &  \\
    &= \bbE_{\V{g}} \left [ \sup_{\V{x} \in \mathcal{E}} \sum_{i = 1}^D \frac{x_i}{r_i} g_i r_i  \right ] & \frac{r_i}{r_i} = 1 \\
    &\leq \bbE_{\V{g}} \left [ \sup_{\V{x} \in \mathcal{E}} \left ( \sum_{i = 1}^D \frac{x_i^2}{r_i^2} \right )^{1/2} \left ( \sum_{i = 1}^D g_i^2 r_i^2 \right )^{1/2}  \right ] & \text{Cauchy-Schwarz inequality} \\
    &\leq \bbE_{\V{g}} \left [ \left ( \sum_{i = 1}^D g_i^2 r_i^2 \right )^{1/2}  \right ] & \text{Definition of $\mathcal{E}$} \\
    &\leq \sqrt{\bbE_{\V{g}} \left [ \sum_{i = 1}^D g_i^2 r_i^2   \right ]} & \text{Jensen's inequality} \\
    &\leq \left ( \sum_{i = 1}^D r_i^2   \right )^{1/2} & \bbE[w_i^2] = 1
\end{align*}
giving the upper bound in the lemma.  For the lower bound, we will begin with a general lower bound for Gaussian widths using two facts.  The first is that if $\epsilon_i$ are i.i.d. Rademacher random varaibles and, then $\epsilon_i |g_i| \sim \mathcal{N}(0, 1)$.  Second, we have:

\begin{equation*}
  \E[|g_i|] = \frac{1}{2 \pi} \int_{-\infty}^{\infty} |y| e^{-y^2/2} dy = \frac{2}{\sqrt{2 \pi}} \int_{0}^{\infty} y e^{-y^2/2} = \frac{2}{\pi}
\end{equation*}

\noindent Then for the Gaussian width of a general set:

\begin{equation*}
\begin{aligned}
  w(S) &= \E \left [ \sup_{x \in S} \sum_{i = 1}^{D} w_i x_i \right ] & \\
  &= \E_{\epsilon} \left [ \E_w \left [ \sup_{x \in S} \sum_{i = 1}^{n} \epsilon_i |g_i| \cdot x_i \bigg | \epsilon_{1:n} \right ] \right ] & \text{Using $\epsilon_i |g_i| \sim \mathcal{N}(0, 1)$} \\
  &\geq \E_{\epsilon} \left [ \sup_{x \in S} \sum_{i = 1}^{D} \epsilon_i \E[|g_i|] x_i  \right ]  & \text{Jensen's Inequality}\\
  &= \sqrt{ \frac{2}{\pi} } \E \left [ \sup_{x \in S} \sum_{i = 1}^{D} \epsilon_i x_i  \right ] &
\end{aligned}
\end{equation*}
All that remains for our lower bound is to show that for the ellipsoid $\E \left [ \sup_{x \in \mathcal{E}} \sum_{i = 1}^{D} \epsilon_i x_i  \right ] = \left ( \sum_{i = 1}^D r_i^2   \right )^{1/2}$.  We begin by showing it is an upper-bound:
\begin{equation*}
\begin{aligned}
  \E \left [ \sup_{x \in \mathcal{E}} \sum_{i = 1}^{D} \epsilon_i x_i  \right ] &=  \sup_{x \in \mathcal{E}} \sum_{i = 1}^{D} |x_i|   &  \text{Using $\mathcal{E}$ is symmetric} \\
  &=  \sup_{x \in \mathcal{E}} \sum_{i = 1}^{D} \left |\frac{x_i r_i}{r_i} \right |   &  \frac{r_i}{r_i} = 1 \\
  &\leq  \sup_{\V{x} \in \mathcal{E}} \left ( \sum_{i = 1}^D \frac{x_i^2}{r_i^2} \right )^{1/2} \left ( \sum_{i = 1}^D r_i^2 \right )^{1/2}   & \text{Cauchy-Schwarz inequality} \\
  &= \left ( \sum_{i = 1}^D r_i^2 \right )^{1/2} & \text{Definition of $\mathcal{E}$}
\end{aligned}
\end{equation*}
In the first line, we mean that $\mathcal{E}$ is symmetric about the origin such that we can use $\epsilon_i = 1$ for all $i$ without loss of generality.  Finally, consider $\V{x}$ such that $x_i = r_i^2/\left ( \sum_{i = 1}^D r_i^2 \right )^{1/2}$.  For this choice we have $\V{x} \in \mathcal{E}$ and:
\begin{equation*}
     \sum_{i = 1}^{D} |x_i| = \sum_{i = 1}^{D} \frac{r_i^2}{\left ( \sum_{i = 1}^D r_i^2 \right )^{1/2}} = \left ( \sum_{i = 1}^D r_i^2 \right )^{1/2}
\end{equation*}
showing that equality is obtained in the bound.  Putting these steps together yields the overall desired lower bound:
\begin{equation*}
    w(\mathcal{E}) \geq \sqrt{ \frac{2}{\pi} } \cdot \E \left [ \sup_{x \in \mathcal{E}} \sum_{i = 1}^{D} \epsilon_i x_i  \right ] = \sqrt{ \frac{2}{\pi} } \cdot \left ( \sum_{i = 1}^D r_i^2 \right )^{1/2}
\end{equation*}
\end{proof}

With this bound in hand, we can immediately obtain the following corollary for a quadratic well defined by Hessian $\Mx{H}$.  The Gaussian width is invariant under affine transformation so we can shift the well to the origin. Then note that $S(\epsilon)$ is an ellipsoid with $r_i = \sqrt{2 \epsilon/ \lambda_i}$ and thus $\sum_i r_i^2 =  \epsilon \Tr(\Mx{H}^{-1})$. 

\begin{corollary}
[Gaussian width of quadratic sublevel sets] Consider a quadratic well defined by Hessian $\Mx{H} \in \bbR^{D \times D}$.  Then the Gaussian width squared of the associated sublevel sets $S(\epsilon)$ obey the following bound:
\begin{equation*}
  \sqrt{ \frac{2}{\pi}} \cdot 2\epsilon \Tr(\Mx{H}^{-1})  \leq  w^2(S(\epsilon)) \leq 2\epsilon \Tr(\Mx{H}^{-1})
\end{equation*}
\end{corollary}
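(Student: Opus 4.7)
The plan is to reduce the corollary directly to the previously established ellipsoid bound by a change of coordinates. First I would use positive definiteness of $\Mx{H}$ to diagonalize it as $\Mx{H} = \Mx{U}\Mx{\Lambda}\Mx{U}^\top$ with strictly positive eigenvalues $\lambda_1, \ldots, \lambda_D$. Under the orthogonal coordinate change $\V{x} = \Mx{U}^\top(\V{w} - \V{w}_*)$ (where $\V{w}_*$ is the minimizer, if the well is shifted), the sublevel set $\cL(\V{w}) \leq \epsilon$ becomes the axis-aligned ellipsoid $\mathcal{E} = \{\V{x} \in \bbR^D : \sum_i \lambda_i x_i^2 \leq 2\epsilon\}$, i.e.\ exactly the object of the preceding lemma with semi-axes $r_i = \sqrt{2\epsilon/\lambda_i}$.

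Next I would justify that neither step of this reduction affects the Gaussian width. Translation invariance is immediate from $\E\V{g} = \V{0}$, and orthogonal invariance follows because $\V{g} \sim \mathcal{N}(\V{0}, \Mx{I}_D)$ is rotationally invariant in distribution, so for any orthogonal $\Mx{U}$,
\begin{equation*}
w(\Mx{U} S) = \E\sup_{\V{y} \in S}\langle \V{g}, \Mx{U}\V{y}\rangle = \E\sup_{\V{y} \in S}\langle \Mx{U}^\top\V{g}, \V{y}\rangle = w(S).
\end{equation*}
Hence $w(S(\epsilon)) = w(\mathcal{E})$.

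Finally, I would compute $\sum_i r_i^2 = 2\epsilon \sum_i \lambda_i^{-1} = 2\epsilon\,\Tr(\Mx{H}^{-1})$ and plug into the lemma's sandwich $\sqrt{2/\pi}\sum_i r_i^2 \leq w(\mathcal{E})^2 \leq \sum_i r_i^2$, which yields the claimed two-sided bound on $w^2(S(\epsilon))$. There is no real obstacle here: given the ellipsoid lemma, the corollary is a routine substitution, and the only ingredient beyond that is the coordinate-invariance of Gaussian width, which is a one-line consequence of the rotational symmetry of the standard Gaussian.
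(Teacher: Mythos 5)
Your proposal is correct and follows essentially the same route as the paper, which likewise treats the corollary as an immediate consequence of the ellipsoid lemma after shifting the well to the origin and identifying $S(\epsilon)$ as an ellipsoid with $r_i = \sqrt{2\epsilon/\lambda_i}$. Your spelled-out justification of translation and rotation invariance of the Gaussian width is slightly more careful than the paper's one-line appeal to affine invariance (which, as stated there, is a mild overstatement since general affine maps rescale the width), but the substance is identical.
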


\subsection{Details on \threshdim{} upper bound}

In \cref{sec:quadratic}, we consider the projection of ellipsoidal sublevel sets onto the surface of a unit sphere centered at $\V{w_0}$. The Gaussian width of this projection $\mathrm{proj}_{\V{w}_0}(S(\epsilon))$ will depend on the distance $R \equiv ||\V{w}_0||$ from the initialization to the global minimum at $\V{w}=\V{0}$ (i.e. it should increase with decreasing $R$). We used a crude approximation to this width as follows.  Assuming $D \gg 1$, the direction $\hat{\V{e}}_i$ will be approximately orthogonal to $\V{w}_0$, so that $| \hat{\V{e}}_i \cdot \V{x}_0 | \ll R$. The distance between the tip of the ellipsoid at radius $r_i$ along principal axis $\V{e}_i$ and the initialization $\V{w}_0$ is therefore $\rho_i = \sqrt{R^2 + r_i^2}$. The ellipse's radius $r_i$ then gets scaled down to  approximately $r_i / \sqrt{R^2 + r_i^2}$ when projected onto the surface of the unit sphere.

We now explain why this projected size is always a lower bound by illustrating the setup in two dimensions in \cref{fig:UpperBoundQuadratic}. As shown, the linear extent of the projection will always result from a line that is tangent to the ellipse. For an ellipse $(x/a)^2 + ((y-R)/b)^2 = 1$ and a line $y = cx$ in a two-dimensional space (we set the origin at the center of the unit circle), a line tangent to the ellipse must satisfy $c = a / \sqrt{R^2-b^2}$. That means that the linear extent of the projection on unit circle will be $a / \sqrt{a^2+R^2-b^2}$. For $a=2 \epsilon / \lambda_i$ and $R=R$, this is exactly Eq.~\ref{eq:quadratic_analytics} provided $b=0$. The $b\neq0$ will \textit{always} make the linear projections \textit{larger}, and therefore Eq.~\ref{eq:quadratic_analytics} will be a lower bound on the projected Gaussian width.  Furthermore, this bound will be looser with decreasing $R$. We then obtain a corresponding upper bound on the \threshdim{}, i.e. $D - d_{\mathrm{local}}(\epsilon,R)$.

\begin{figure*}[!ht]
  \centering
  \includegraphics[width=0.8\linewidth]{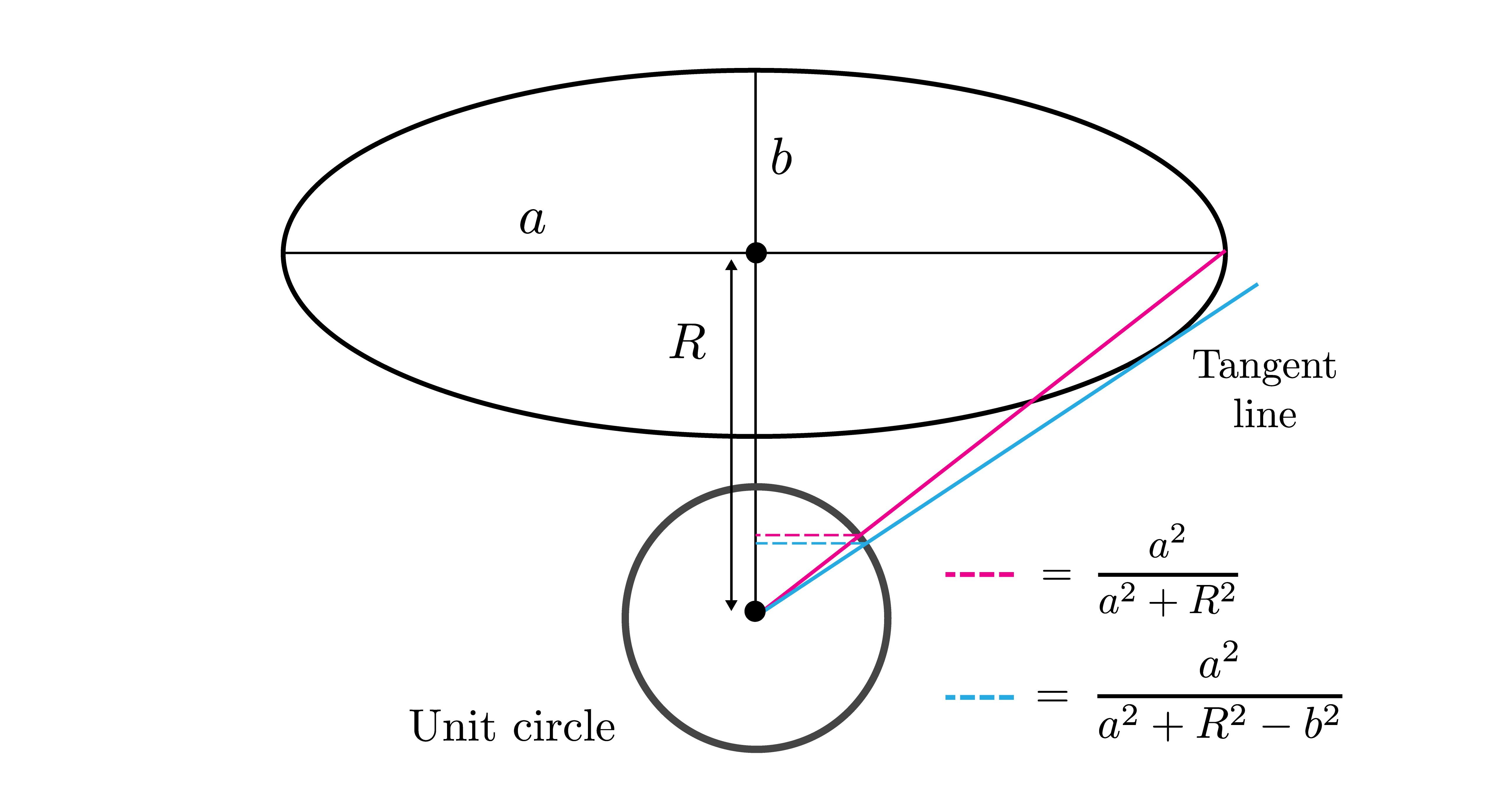}
  \caption{Illustration in two dimensions why the projection of the principal axes of an ellipse onto the unit circle will lower bound the size of the projected set.  The linear extent of the projection will result from a line that lies tangent to the ellipse.}
  \label{fig:UpperBoundQuadratic}
\end{figure*}

\subsection{\Threshdim{} of affine subspaces}
\label{sec:subspaceIntersect}

In \Cref{sec:quadratic}, we considered the \threshdim{} of the sublevel sets of a quadratic well and showed that it depends on the distance from the initialization to the set, formalized in \cref{eq:quadratic_analytics}.  As a point of contrast, we include a derivation of the \threshdim{} of a random affine subspace in ambient dimension $D$ and demonstrate that this dimension does not depend on distance to the subspace.  Intuitively this is because any dimension in the subspace is of infinite or zero extent, unlike the quadratic sublevel sets which have dimensions of finite extent. 

Let us consider a $D$-dimensional space for which we have a randomly chosen $d$-dimensional affine subspace $A$ defined by a vector offset $\V{x}_0 \in \mathbb{R}^D$ and a set of $d$ orthonormal basis vectors $\{ \hat{\V{v}}_i \}_{i = 1}^{d}$ that we encapsulate into a matrix $\Mx{M} \in \mathbb{R}^{d \times D}$. Let us consider another random $n$-dimensional affine subspace $B$. Our task is to find a point $\V{x}^* \in A$ that has the minimum $\ell_2$ distance to the subspace $B$, i.e.:
\begin{equation*}
\V{x}^* = \mathrm{argmin}_{\V{x} \in A} \big \| \V{x} -  \mathrm{argmin}_{\V{x^\prime} \in B} \left \| \V{x} - \V{x}^\prime \right \|_2 \big \|_2 
\end{equation*}
In words, we are looking for a point in the $d$-dimensional subspace $A$ that is as close as possible to its closest point in the $n$-dimensional subspace $B$. Furthermore, points within the subspace $A$ can be parametrized by a $d$-dimensional vector $\V{\theta} \in \mathbb{R}^d$ as $\V{x}(\V{\theta}) = \V{\theta} \Mx{M} + \V{x}_0 \in A$; for all choices of $\V{\theta}$, the associated vector $\V{x}$ is in the subspace $A$.   

Without loss of generality, let us consider the case where the $n$ basis vectors of the subspace $B$ are aligned with the dimensions $D-n,D-n+1,\dots,D$ of the coordinate system (we can rotate our coordinate system such that this is true). Call the remaining axes $s = D-n$ the \textit{short} directions of the subspace $B$. A distance from a point $\V{x}$ to the subspace $B$ now depends only on its coordinates $1,2,\dots,s$. Under our assumption of the alignment of subspace $B$ we then have:
\begin{equation*}
l^2(\V{x},B) := \mathrm{argmin}_{\V{x^\prime} \in B} \left \| \V{x} - \V{x}^\prime \right \|_2^2 =   \sum_{i=1}^s x_i^2
\end{equation*}
The only coordinates influencing the distance are the first $s$ values, and thus let us consider a $\mathbb{R}^s$ subspace of the original $\mathbb{R}^D$ only including those without loss of generality. Now $\V{\theta} \in \mathbb{R}^d$, $\Mx{M} \in \mathbb{R}^{d \times s}$ and $\V{x}_0 \in \mathbb{R}^d$, and the distance between a point within the subspace $A$ parameterized by the vector $\V{\theta}$ is given by:
\begin{equation*}
l^2\big (\V{x}(\V{\theta}),B \big ) = \left \| \V{\theta} \Mx{M}  + \V{X}_0 \right \|^2.
\end{equation*}

The distance $l$ attains its minimum for 
\begin{equation*}
\partial_{\V{\theta}} l^2 \big (\V{x}(\V{\theta}),B \big ) = 2 \cdot \left ( \V{\theta} \Mx{M} + \V{x}_0 \right ) \Mx{M}^T = \V{0}
\end{equation*}
yielding the optimality condition $\V{\theta}^* \Mx{M} = -\V{x_0}$. There are 3 cases based on the relationship between $d$ and $s$.

\textbf{1. The overdetermined case, $d > s$.} In case $d > s = D-n$, the optimal $\V{\theta}^* = -  \V{x}_0 \Mx{M}^{-1} $ belongs to a ($d-s = d+n-D$)-dimensional family of solutions that attain $0$ distance to the plane $B$. In this case the affine subspaces $A$ and $B$ intersect and share a ($d+n-D$)-dimensional intersection.

\textbf{2. A unique solution case, $d = s$.} In case of $d = s = D-n$, the solution is a unique $\V{\theta}^* = - \V{x}_0 \Mx{M}^{-1}$. After plugging this back to the distance equation, we obtain $\V{\theta}$ is
\begin{align*}
l^2(\V{x} \big (\V{\theta}^*),B \big ) &= \left \| - \V{x}_0 \Mx{M}^{-1} \Mx{M} + \V{x}_0 \right \|^2 \\
&= \left \| - \V{x}_0 + \V{x}_0 \right \|^2 = 0.
\end{align*}
The matrix $\Mx{M}$ is square in this case and cancels out with its inverse $\Mx{M}^{-1}$.

\textbf{3. An underdetermined case, $d < s$.} In case of $d < s$, there is generically no intersection between the subspaces. The inverse of $\Mx{M}$ is now the Moore-Penrose inverse $\Mx{M}^+$. Therefore the closest distance $\V{\theta}$:
\begin{equation*}
l^2\big (\V{x}(\V{\theta}^*),B \big) = \left \| - \V{X}_0 \Mx{M}^{+} \Mx{M} + \V{x}_0 \right \|^2 
\end{equation*}
Before our restriction from $D \to s$ dimensions, the matrix $\Mx{M}$ consisted of $d$ $D$-dimensional, mutually orthogonal vectors of unit length each. We will consider these vectors to be component-wise random, each component with variance $1/\sqrt{D}$ to satisfy this condition on average. After restricting our space to $s$ dimensions, $\Mx{M}$'s vectors are reduced to $s$ components each, keeping their variance $1/\sqrt{D}$. They are still mutually orthogonal in expectation, but their length are reduced to $\sqrt{s} / \sqrt{D}$. The transpose of the inverse $\Mx{M}^+$ consists of vectors of the same directions, with their lengths scaled up to $\sqrt{D} / \sqrt{s}$. That means that in expectation, $\Mx{M} \Mx{M}^+$ is a diagonal matrix with $d$ diagonal components set to $1$, and the remainder being $0$. The matrix $(\Mx{I} - \Mx{M}^+ \Mx{M})$ contains $(s-d)$ ones on its diagonal. The projection $\|\V{x}_0 (\Mx{I} - \Mx{M}^+ \Mx{M})\|^2$ is therefore of the expected value of $\|\V{x}_0\|^2 (s-d)^2 / D$. The expected distance between the $d$-dimensional subspace $A$ and the $d$-dimensional subspace $B$ is:
\begin{equation*}
\pushQED{\qed}
\mathbb{E}\big [ d(A,B) \big ] \propto \begin{cases} \frac{\sqrt{D - n - d}}{\sqrt{D}} & n + d < D \, , \\ 0 & n + d \ge D \, . \end{cases}  \qedhere
\popQED
\label{eqn:app_distance_analytic_scaling}
\end{equation*}

To summarize, for a space of dimension $D$, two affine subspaces generically intersect provided that their dimensions $d_A$ and $d_B$ add up to at least the ambient (full) dimension of the space. The exact condition for intersection is $d_A + d_B \ge D$, and the \threshdim{} of subspace $B$ is $D - d$.  This result provides two main points of contrast to the quadratic well:

\begin{itemize}
  \item \textbf{Even extended directions are not infinite for the quadratic well.} While in the case of the affine subspaces even a slight non-coplanarity of the target affine subspace and the random training subspace will eventually lead to an intersection, this is not the case for the sublevel sets of the quadratic well. Even its small eigenvalues, i.e. shallow directions, will still have a finite extent for all finite $\epsilon$.
  \item \textbf{Distance independence of the \threshdim{}.} As a result of the dimensions having finite extent, the distance independence of \threshdim{} for affine subspaces does not carry over to the case of quadratic wells. In the main text, this dependence on distance is calculated by projecting the set onto the unit sphere around the initialization enabling us to apply Gordon's Escape Theorem.
\end{itemize}

\end{document}